\documentclass{article} 
\usepackage{iclr2026_conference,times}

\usepackage{amsmath,amsfonts,bm}

\def\eqref#1{equation~\ref{#1}}

\def\1{\bm{1}}

\DeclareMathAlphabet{\mathsfit}{\encodingdefault}{\sfdefault}{m}{sl}
\SetMathAlphabet{\mathsfit}{bold}{\encodingdefault}{\sfdefault}{bx}{n}

\usepackage[utf8]{inputenc} 
\usepackage[T1]{fontenc}    

\usepackage{amsmath}        
\usepackage{amsfonts}       
\usepackage{amssymb}        
\usepackage{pifont}         
\usepackage{mathrsfs}       
\usepackage{nicefrac}       
\usepackage{microtype}      
\usepackage{graphicx}       
\usepackage{booktabs}       
\usepackage{multirow}       
\usepackage{makecell}       
\usepackage{array}          
\usepackage{tabularx}       
\usepackage{siunitx}        
\usepackage{wrapfig}        
\usepackage{gensymb}        
\usepackage{textcomp}       
\usepackage{xcolor}         
\usepackage{tikz}           
\usepackage{pgfplots}       
\pgfplotsset{compat=1.17}
\usepackage{hyperref}       
\usepackage{url}            
\usepackage{caption}
\usepackage{subcaption}     
\usepackage{algorithm}
\usepackage{algpseudocode}
\usepackage[table]{xcolor}  
\definecolor{lightgray}{gray}{0.9}  

\usepackage{hyperref}
\usepackage{url}
\usepackage{array}
\usepackage{xcolor,colortbl,booktabs}
\definecolor{ourshighlight}{rgb}{0.92,0.95,1.0}
\definecolor{lightrow}{rgb}{0.95,0.95,0.95}
\newtheorem{theorem}{Theorem}
\usepackage{multirow}
\usepackage{graphicx}
\usepackage[table]{xcolor}
\usepackage{amsmath}
\usepackage{amssymb}
\usepackage{multicol}

\title{Robust Spiking Neural Networks Against\\Adversarial Attacks}

\author{Shuai Wang$^{1}$, {Malu Zhang}$^{1,3}$\thanks{Corresponding author: maluzhang@uestc.edu.cn}, Yulin Jiang$^{1}$,\textbf{Dehao Zhang}$^{1}$, Ammar Belatreche$^{2}$,\\ \textbf{Yu Liang}$^{1}$, \textbf{Yimeng Shan}$^{1}$, \textbf{Zijian Zhou}$^{1}$, \textbf{Yang Yang}$^{1}$, \textbf{Haizhou Li}$^{3,4}$,\\ 
~\\
$^{1}$University of Electronic Science and Technology of China $^{2}$Northumbria University,\\
$^{3}$Shenzhen Loop Area Institute, $^{4}$The Chinese University of Hong Kong, Shenzhen.
}

\iclrfinalcopy
\begin{document}

\maketitle

\begin{abstract}
Spiking Neural Networks (SNNs) represent a promising paradigm for energy-efficient neuromorphic computing due to their bio-plausible and spike-driven characteristics. 
However, the robustness of SNNs in complex adversarial environments remains significantly constrained. In this study, we theoretically demonstrate that those threshold-neighboring spiking neurons are the key factors limiting the robustness of directly trained SNNs.
We find that these neurons set the upper limits for the maximum potential strength of adversarial attacks and are prone to state-flipping under minor disturbances. To address this challenge, we propose a Threshold Guarding Optimization (TGO) method, which comprises two key aspects. First, we incorporate additional constraints into the loss function to move neurons' membrane potentials away from their thresholds. It increases SNNs' gradient sparsity, thereby reducing the theoretical upper bound of adversarial attacks. Second, we introduce noisy spiking neurons to transition the neuronal firing mechanism from deterministic to probabilistic, decreasing their state-flipping probability due to minor disturbances. Extensive experiments conducted in standard adversarial scenarios prove that our method significantly enhances the robustness of directly trained SNNs. These findings pave the way for advancing more reliable and secure neuromorphic computing in real-world applications.
\end{abstract}

\section{Introduction}
Spiking Neural Networks (SNNs)~\citep{maass1997networks, gerstner2002spiking, izhikevich2003simple, masquelier2008spike} mimics biological information
transmission mechanisms using discrete spikes as the medium for information exchange, representing the cutting edge of neural computation~\citep{cao2020overview, varghese2016challenges}. Spiking neurons fire spikes only upon activation and remain silent otherwise. 
This event-driven mechanism~\citep{liu2018event} promotes sparse synapse operations and avoids multiply-accumulate (MAC) operations, significantly enhancing energy efficiency on neuromorphic platforms~\citep{pei2019towards,debole2019truenorth,ma2024darwin3,pei2019towards}. 
Recently, directly training SNNs with surrogate gradient methods~\citep{wu2018spatio, wu2019direct,deng2022temporal,li2021differentiable,wang2025ternary} has significantly reduced their performance gap with ANNs in classification tasks~\citep{yao2024spike, shi2024spikingresformer, zhou2024qkformer,wang2025snn, liang2025bso}. However, these directly trained SNNs rely on Backpropagation Through Time (BPTT)~\citep{werbos1990backpropagation}, thereby inheriting significant robustness issues associated with ANNs.

Directly trained SNNs~\citep{ fang2021incorporating, zhou2023spikingformer, bu2022optimized, duan2022temporal} using surrogate gradient methods often exhibit a strong dependency on specific patterns or features~\citep{ding2022snn, mukhotycertified}, rendering them particularly sensitive to minor disturbances. 
This characteristic reduces robustness in complex environments, especially against finely crafted adversarial disturbances~\citep{laskov2010machine}. 
To enhance the robustness of SNNs against adversarial attacks, researchers adapt strategies from ANNs, such as adversarial training~\citep{ho2022attack, ding2022snn} and certified training~\citep{zhang2019towards, liang2022toward}. 
Furthermore, researchers develop optimization methods tailored to spike-driven mechanisms, integrating with adversarial training to enhance robustness. Some researchers~\citep{sharmin2020inherent,ding2023spike,el2021securing} utilize the temporal characteristics of SNNs to counteract environmental white noise attacks. Additionally, Evolutionary Leak Factor~\citep{xufeel}, MPD-SGR~\cite{jiang2025mpd} and gradient sparsity regularization (SR)~\citep{liuenhancing} significantly enhance the robustness of SNNs against gradient-based attacks. However, a comprehensive and unified analysis of the robustness bottlenecks in directly trained SNNs remains lacking.


In this study, we theoretically demonstrate that threshold-neighboring spiking neurons are a key factor influencing the robustness of directly trained SNNs under adversarial attacks. We find that these neurons provide maximum potential pathways for adversarial attacks and are more prone to state-flipping under minor disturbances. To address this, we propose a Threshold Guarding Optimization (TGO) method. The TGO method aims to: (1) maximize the distance between neurons' membrane potentials and their thresholds to enhance gradient sparsity; (2) minimize the probability of state-flipping in neurons under minor disturbances. A series of experiments in standard adversarial scenarios demonstrates that our TGO method significantly enhances the robustness of directly trained SNNs. The contributions of this work are summarized as follows:

\begin{itemize}
\item We theoretically demonstrate that those threshold-neighboring spiking neurons are critical in limiting the robustness of directly trained SNNs under adversarial attacks. These neurons set the upper limits for the maximum potential strength of adversarial attacks and are prone to state-flipping under minor disturbances. 

\item We propose a Threshold Guarding Optimization (TGO) method, aiming to minimize threshold-neighboring neurons' sensitivity to adversarial attacks. First, we integrate additional constraints into the loss function, distancing the membrane potential from the threshold. Second, we introduce noisy spiking neurons to transit neuronal firing from deterministic to probabilistic, reducing the probability of state flips due to minor disturbances.

\item We validate the effectiveness of the TGO method across various adversarial attack scenarios using different training strategies. Extensive experiments demonstrate TGO method achieves state-of-the-art (SOTA) performance in multiple adversarial attacks, significantly enhancing the robustness of SNNs. Notably, TGO method incurs no additional computational overhead during inference, providing a feasible pathway toward robust edge intelligence.
\end{itemize}

\section{Related Work}
\label{sec:related}
\textbf{Spiking Neural Networks:}
SNNs offer a promising solution for resource-constrained edge computing~\citep{ zhang202322}. To enhance the performance of SNNs, 
\cite{wu2018spatio} introduces the spatial-temporal backpropagation (STBP) algorithm, an adaptation of BPTT from Recurrent Neural Networks (RNNs)~\citep{graves2012long, lipton2015critical}. This method uses surrogate functions to approximate the non-differentiable Heaviside step function in spiking neurons. Additionally, researchers explore parallel training strategies~\citep{fang2024parallel} within the ResNet framework~\citep{he2016deep}, shortcut residual connections~\citep{zheng2021going, hu2021spiking, lee2020enabling, fang2021deep}, and Spike transformer~\citep{li2022spikeformer}. 
Notably, SNNs have achieved performance comparable to their ANN counterparts across diverse vision tasks, including image classification~\citep{yao2024spike, xiao2025rethinking}, object detection~\citep{wang2025spiking}, and semantic segmentation~\citep{zhang2025dendritic}.
Despite surrogate gradient methods~\citep{ deng2023surrogate, yang2023effective} significantly improve training efficiency, SNNs remain susceptible to adversarial attacks as ANNs~\citep{finlayson2019adversarial,xu2020adversarial}, limiting their applicability in adversarial environments.

\textbf{Robustness of SNNs in Adversarial Attacks}
While biologically event-driven mechanisms~\citep{marchisio2020spiking, hao2020biologically} enhance SNNs' adaptability in complex environments, empirical studies~\citep{liang2021exploring, el2021securing} reveal that directly trained SNNs remain vulnerable to adversarial attacks. Initial efforts to mitigate this vulnerability start with adapting Adversarial Training (AT) \citep{goodfellow2014explaining, kundu2021hire} and subsequently advance to Regularized Adversarial Training (RAT) \citep{ding2022snn} with Lipschitz analysis. However, these approaches are constrained by additional training overhead and limited portability~\citep{shafahi2019adversarial}. Recently, researchers have developed optimization methods tailored to spike-driven mechanisms of SNNs. Such as \cite{hao2023threaten} enhances intrinsic robustness through rate-temporal information integration. \citep{xufeel} introduces FEEL-SNN with random membrane potential decay and innovative encoding mechanisms, and \cite{ding2024enhancing} develops gradient SR to strengthen defenses against Fast Gradient Sign Method (FGSM)~\citep{goodfellow2014explaining} and Projected Gradient Descent (PGD)~\citep{madry2017towards}. Despite these advances, these strategies achieve significant enhancements only through synergistic integration with AT and RAT strategies. Moreover, a theoretical analysis of SNNs' inherent vulnerabilities in adversarial environments is still lacking. Thus, devising more effective robustness optimization strategies for SNNs remains a focused research.
\section{Preliminaries}
\subsection{Surrogate Gradient for direct trained SNNs}
SNNs effectively model the complex dynamics of biological neurons. Within the Leaky Integrate-and-Fire (LIF) framework, the membrane potential transitions through three key stages: integration, leakage, and firing. During integration, the membrane potential \(V[t]\) accumulates over time in response to incoming spikes. When \(V[t]\) exceeds a predefined threshold \(V_{\mathrm{th}}\), it triggers a spike that may influence downstream neurons. Following the spike, the membrane potential is reset to a specified baseline \(V_{\mathrm{reset}}\), preparing the neuron for subsequent inputs, which can be described as:
\begin{equation}
V[t] = \tau U[t-1] + W S[t],
\end{equation}
\begin{equation}
S[t] = \Theta \left( V[t] - V_{\mathrm{th}} \right),
\end{equation}
\begin{equation}
U[t] = V[t] \left( 1 - S[t] \right) + V_{\mathrm{reset}} S[t],
\end{equation}
where \( \tau \) is the membrane time constant, \( W \) represents the synaptic weights, \( S[t] \) denotes the spike at time \( t \), and \( \Theta(\cdot) \) is the Heaviside step function, indicating firing when \( V[t] \) exceeds \( V_{\mathrm{th}} \). In the directly trained SNNs, the total loss $L$ with respect to the weights $W$ can be described as:
\begin{equation}
    \frac{\partial L}{\partial W} = \sum_{t} \frac{\partial L}{\partial S[t]} \frac{\partial S[t]}{\partial V[t]}  \frac{\partial V[t]}{\partial W}.
\end{equation}
where \(\frac{\partial S[t]}{\partial V[t]}\) represents the gradient of a non-differentiable step function involving the derivative of the Dirac \(\delta\)-function, which is typically replaced by surrogate gradients with derivable curves. Various forms of surrogate gradients have been utilized, such as rectangular~\citep{wu2018spatio, wu2019direct}, triangular~\citep{esser2016cover, rathi2020diet}, and exponential~\citep{shrestha2018slayer} curves. Surrogate gradients provide a differentiable approximation to non-differentiable functions. 

\subsection{Adversarial Attacks}
Adversarial attacks, including the Fast Gradient Sign Method (FGSM)~\citep{goodfellow2014explaining}and Projected Gradient Descent (PGD)~\citep{madry2017towards}, rely on the model’s gradient information to craft adversarial examples. FGSM generates such examples by applying a single-step perturbation designed to maximize the model’s prediction error. The adversarial input is calculated as:
\begin{equation}
    \mathbf{x}_{\text{adv}} = x + \epsilon \cdot \text{sign}(\nabla_{x} \mathcal{L}(x, y_{\text{true}})),
\end{equation}
where \(\mathbf{x}_{\text{adv}}\) is the adversarial example, \(x\) is the original input, \(\epsilon\) is the perturbation magnitude, \(\mathcal{L}(x, y_{\text{true}})\) is the loss function, and \(\text{sign}(\nabla_{x} \mathcal{L}(x, y_{\text{true}}))\) gives the sign of the gradient concerning the input. This process leverages the model’s loss landscape to introduce minimal disturbances that significantly increase the classification error. Building on this, PGD iteratively refines adversarial examples by applying gradient updates and projecting them back into a bounded \(\epsilon\)-ball centered on the original input. The attack rule of PGD can be expressed as:
\begin{equation}
    \mathbf{x}_{\text{adv}}^{t+1} = \text{Clip}_{x, \epsilon} \left( \mathbf{x}_{\text{adv}}^t + \alpha \cdot \text{sign}(\nabla_{x} \mathcal{L}(\mathbf{x}_{\text{adv}}^t, y_{\text{true}})) \right),
\end{equation}
where \(\mathbf{x}_{\text{adv}}^t\) is the adversarial example at iteration \(t\), \(\alpha\) is the step size, and \(\text{Clip}_{\mathbf{x}, \epsilon}\) ensures that the perturbation remains within the prescribed \(\epsilon\)-ball. 
PGD employs a multi-step approach to more precisely explore the disturbance space, producing adversarial examples closer to the optimal solution. At the same time, it strictly constrains the magnitude of disturbances, ensuring the perturbed input remains nearly indistinguishable from the original data to human observers.
Multi-PGD improves robustness evaluation by reducing sensitivity to initialization and increasing the likelihood of finding stronger adversarial examples. APGD (Auto-PGD) automates and stabilizes the attack by adaptively adjusting hyperparameters such as step size and scheduling, enabling consistently strong performance with minimal manual tuning.
These two methods serve as standard benchmarks for evaluating the adversarial defense capabilities of neural networks.

\section{Methods}
\subsection{Robustness Analysis of Directly Trained SNNs}
\begin{wrapfigure}{}{0.50\textwidth}  
    \centering
    \includegraphics[width=0.5\textwidth]{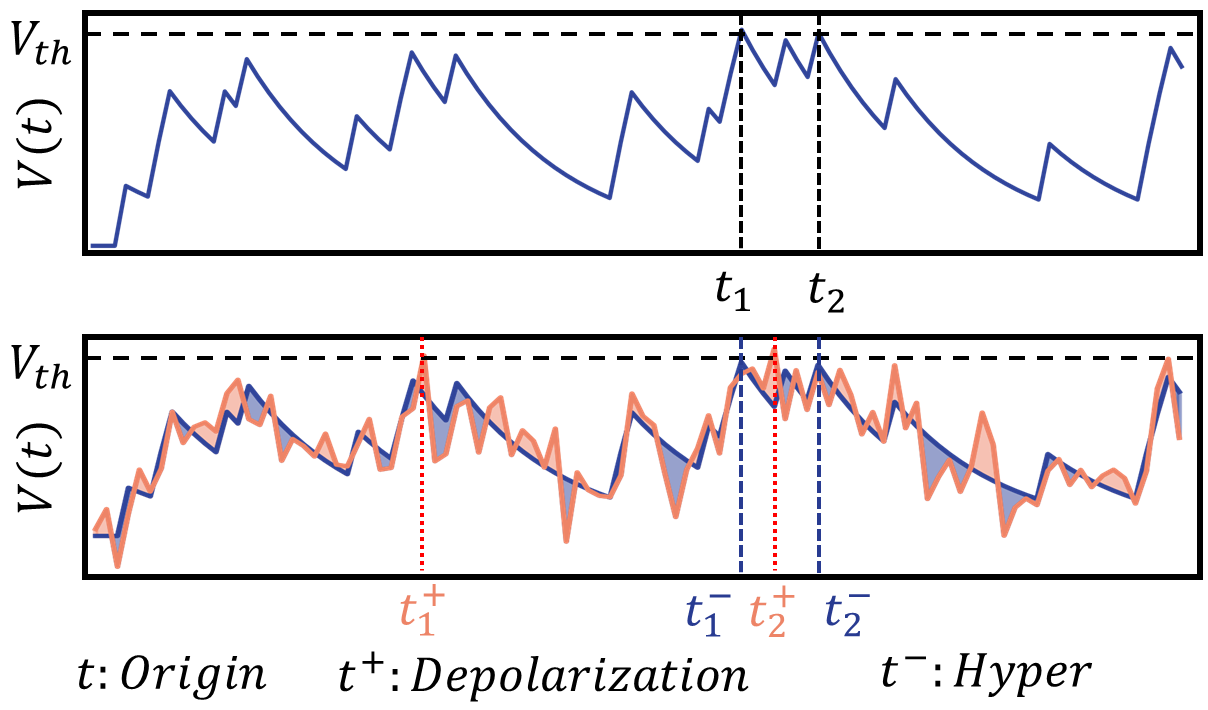}  
    \caption{Red traces represent membrane potential dynamics of spiking neurons under adversarial attack. Only membrane potentials near thresholds undergo spike pattern transitions, while others remain unchanged.}
    \label{Fig1}
    \vspace{-10pt}
\end{wrapfigure}
To explore the key factors affecting the adversarial robustness of SNNs, we conduct a detailed analysis of the SNNs' dynamic properties under adversarial attacks. Our findings highlight two critical vulnerabilities associated with those threshold-neighboring spiking neurons. First, they establish a theoretical upper bound for the maximum potential strength of adversarial attacks. Second, they exhibit a higher probability of state-flipping under minor disturbances.

\textbf{maximum potential gradient-based attack path:} Adversarial attacks strategically modify input disturbances to maximize the expected loss, with these disturbances typically aligning with the gradient of the input data. The metric \( \mathcal{R}_{\text{adv}}(f, x, \epsilon) \) quantifies the maximum potential strength of an adversarial attack on the neural network \( f \) at a specific input \( x \), where the disturbances are constrained within a unit \( \ell_p \)-norm ball and scaled by the factor \( \epsilon \). This measure is mathematically expressed as follows:
\begin{equation}
\mathcal{R}_{\text{adv}}(f,x,\epsilon) = \max_{\|\delta\|_p \leq 1} \|f(x + \epsilon\delta) - f(x)\|_2^2.
\label{7}
\end{equation}
Eq.\ref{7} aims to find the disturbance \(\delta\) that maximizes the squared output difference while remaining within an \(\ell_p\)-norm ball. 
Applying Taylor expansion with the Lagrange remainder, we expand it as follows:
\begin{equation}
f(x + \epsilon \delta) = f(x) + J_f(x) (\epsilon \delta) + \frac{\left(\epsilon \delta\right)^2}{2} H_f(x + \xi \epsilon \delta).\label{8}
\end{equation}
where $\xi \in (0, 1)$ serves as the expansion coefficient. Let \( f: \mathbb{R}^n \to \mathbb{R}^m \) be a continuously differentiable neural network function at point \( x \), and let \( \epsilon > 0 \) be sufficiently small. The matrix \( H_f(x) \) and \( J_f(x) \) respectively denote the Hessian and Jacobian matrix of the function \( f(\cdot) \) at point \( x \).

Utilizing the Cauchy-Schwarz inequality~\cite{steele2004cauchy} and assuming that $\|\delta\|_p \leq 1$, the upper bound of the difference caused by the minor disturbance can be expressed as:
\begin{equation}
\|f(x + \epsilon \delta) - f(x)\|_2^2 \leq \left(\epsilon  \|J_f(x)\|_2 + \frac{\epsilon^2}{2} \lambda_{\text{Hmax}}\right)^2, \label{9}
\end{equation}
Eq.\ref{9} represents the sensitivity of $f$ at $x$ to disturbances along $\delta$. $\|J_f(x)\|$ is the \(\ell_2\) norm of the Jacobian matrix, and $\lambda_{\text{Hmax}}$ is the maximum eigenvalue of $H_f(x)$. Then, we derive the upper bound on the $\mathcal{R}_{\text{adv}}$:
\begin{equation}
\mathcal{R}_{\text{adv}}(f, x, \epsilon) \leq \epsilon^2 \|J_f(x)\|_2^2 + O(\epsilon^2).
\label{11}
\end{equation}
The Jacobian matrix \( J_f(x) \) can be expressed as the collection of gradients of each component of the function:
\begin{equation}
\|J_f(x)\|_2^2 = \lambda_{\text{Jmax}}\left(\sum_{i=1}^{m} \nabla f_i(x) \nabla f_i(x)^T\right). \label{10}
\end{equation}
According to Eq.\ref{10}, the sensitivity of SNNs to adversarial disturbances is correlated with the \(\ell_2\) norm of their Jacobian matrix, where higher gradient \(\ell_2\) norms indicate greater susceptibility to adversarial attacks. Notably, directly trained SNNs typically rely on surrogate gradients, which exhibit peak values near the threshold. As the number of threshold-neighboring spiking neurons increases, the \(\ell_2\) norm of the gradients in SNNs also rises, thereby enlarging \(\mathcal{R}_{\text{adv}}(f, x, \epsilon)\). Consequently, these neurons significantly raise the theoretical upper limit of adversarial perturbation strength. Details can be found in Appendix ~\ref{A}.

\textbf{Strong State-flipping Probability}: 
Adversarial attacks introduce carefully crafted small disturbances into the input data, achieving their disruptive effects. These disturbances propagate through the multi-layers, causing state-flipping in spiking neurons and ultimately altering the final output. Due to the spike-driven nature, changes occur only when spiking neurons' membrane potential crosses the threshold.
\begin{theorem}
\label{th2}
Let \( V[t] \) be the membrane potential, \( V_{\mathrm{th}} \) the threshold, and \( \eta[t] \sim \mathcal{N}(0, \sigma^2) \) random perturbation. The probability \( P_{\mathrm{flip}} \) of each neuron’s flipping is given by:
\[
P_{\mathrm{flip}} = 
\begin{cases} 
\Phi\left( \frac{V_{\mathrm{th}} - V[t]}{\sigma} \right), & \text{if } V[t] \ge V_{\mathrm{th}}, \\
1 - \Phi\left( \frac{V_{\mathrm{th}} - V[t]}{\sigma} \right), & \text{if } V[t] < V_{\mathrm{th}}.
\end{cases}
\]
where \( \Phi \) denotes the cumulative distribution function (CDF) of the standard normal distribution.

\end{theorem}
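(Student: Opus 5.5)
The plan is to reduce the statement to a one-line Gaussian tail computation. First I fix the interpretation that is implicit in Theorem~\ref{th2}. The perturbed membrane potential is $\tilde V[t] = V[t] + \eta[t]$, where $V[t]$ is treated as deterministic (conditioned on). The unperturbed spike is $S[t]=\Theta(V[t]-V_{\mathrm{th}})$ and the perturbed spike is $\tilde S[t]=\Theta(\tilde V[t]-V_{\mathrm{th}})$. A flip is the event $\{\tilde S[t]\neq S[t]\}$. Since $\Theta$ only takes the values $0$ and $1$, this event is determined entirely by which side of $V_{\mathrm{th}}$ the perturbed potential lands on. So the proof splits naturally into the two cases in the statement.

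\emph{Case $V[t]\ge V_{\mathrm{th}}$.} Here $S[t]=1$, so a flip means $\tilde V[t]<V_{\mathrm{th}}$, i.e.\ $\eta[t]<V_{\mathrm{th}}-V[t]$. Standardizing, $Z=\eta[t]/\sigma\sim\mathcal N(0,1)$. Because $Z$ is continuous, whether the inequality is strict does not matter, and
\[
P_{\mathrm{flip}}=\Pr\!\left(Z<\tfrac{V_{\mathrm{th}}-V[t]}{\sigma}\right)=\Phi\!\left(\tfrac{V_{\mathrm{th}}-V[t]}{\sigma}\right).
\]

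\emph{Case $V[t]<V_{\mathrm{th}}$.} Here $S[t]=0$, so a flip means $\tilde V[t]\ge V_{\mathrm{th}}$, i.e.\ $Z\ge (V_{\mathrm{th}}-V[t])/\sigma$. Taking complements gives $1-\Phi\big((V_{\mathrm{th}}-V[t])/\sigma\big)$.

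Finally, I will remark on the consequence the paper needs. By symmetry of $\Phi$, both cases equal $\Phi\!\left(-|V[t]-V_{\mathrm{th}}|/\sigma\right)$. This is decreasing in the distance to threshold and reaches its maximum value $1/2$ as $V[t]\to V_{\mathrm{th}}$. That is exactly the sense in which threshold-neighboring neurons flip most easily.

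The only real obstacle is modelling rather than calculation. The statement has to be read with the perturbation acting additively on $V[t]$ at a single time step, with $V[t]$ fixed, and with no propagation through the reset or through later time steps. I will state these assumptions explicitly: independence of $\eta[t]$ from $V[t]$, $\sigma>0$, and per-time-step analysis. The boundary convention $\Theta(0)=1$ that matches the case split is harmless, since it is a null event.
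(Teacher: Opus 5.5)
Your proposal is correct and follows the paper's proof essentially step for step: define the perturbed potential $V[t]+\eta[t]$, split on whether the unperturbed neuron fires, standardize, and take the complement in the sub-threshold case. Your added observation that both cases equal $\Phi\bigl(-|V[t]-V_{\mathrm{th}}|/\sigma\bigr)$ is a tidy way of stating the monotonicity in distance to threshold, which the paper only asserts in words.
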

Theorem~\ref{th2} defines the relationship between neuronal membrane potential and their state-flipping probability. Specifically, when \( V[t] \ge V_{\mathrm{th}} \), the neuron output switches from 1 to 0, and when \( V[t] < V_{\mathrm{th}} \), it flips from 0 to 1. Since the CDF of the standard normal distribution \( \Phi(\cdot) \) is increasing monotonically, \( P_{\mathrm{flip}} \) increases as the membrane potential \( V[t] \) approaches the threshold potential \( V_{\mathrm{th}} \), whether \( V[t] \) is above or below \( V_{\mathrm{th}} \).As shown in Fig.~\ref{Fig1}, small noise perturbations mainly cause state flips in neurons near the threshold, while fluctuations elsewhere have little effect on spike outputs. Therefore, state flipping directly increases the instability of activation patterns, and its impact on the upper bound of adversarial attacks is summarized as follows.

\begin{theorem}
For a discrete spike pattern mapping $f: \mathbb{R}^n \rightarrow \mathbb{R}^m$, small perturbations $\varepsilon\delta$ around input $x$ induce a finite set of activation pattern transitions. The adversarial robustness upper bound can be approximated as:
$$R_{\text{adv}}(f, x, \varepsilon) \leq \varepsilon^2 \max_{1 \leq k \leq K} \|A_{\mathcal{A}_k}\|_{p \to 2}^2,$$
where $K$ denotes the number of activation regions intersecting the perturbation ball $B_\varepsilon(x)$, and $A_{\mathcal{A}_k} \in \mathbb{R}^{m \times n}$ is the affine transformation matrix for activation pattern $\mathcal{A}_k = \{(l, i) : u_i(x) \geq \theta_i\}$.
\label{theorm111111}
\end{theorem}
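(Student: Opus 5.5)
We argue by treating the spiking network at input $x$ as a piecewise-affine map: once the activation pattern $\mathcal{A}=\{(l,i): u_i(x)\ge\theta_i\}$ is fixed, every spike output is a fixed $0/1$ constant. The layer-to-layer map then reduces to a composition of affine maps, with the weights restricted to firing coordinates. The plan is to (i) formalize this region decomposition, (ii) bound the output change inside a single region by an operator norm, and (iii) glue the regions along the perturbation segment.

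\textbf{Step 1 (region decomposition).} For each pattern $\mathcal{A}_k$, define the region $\mathcal{R}_k=\{z: \text{pattern of } z = \mathcal{A}_k\}$. We work in the surrogate or linearized sense the paper uses for the Jacobian, and we treat the output $f$ as the membrane-potential readout, so that on $\mathcal{R}_k$ we have $f(z)=A_{\mathcal{A}_k}z+b_k$. Each $\mathcal{R}_k$ is cut out by the threshold inequalities $u_i(z)\ge\theta_i$ or $<\theta_i$. Within one region these are affine in $z$, so each region is a polyhedron. By Theorem~\ref{th2}, only neurons with $|V[t]-V_{\mathrm{th}}|=O(\varepsilon)$ can flip. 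Hence only finitely many patterns, $K$ of them, meet $B_\varepsilon(x)$.

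\textbf{Step 2 (within-region bound).} For $\|\delta\|_p\le1$, split the segment $s\mapsto x+s\varepsilon\delta$, $s\in[0,1]$, at its crossings of region boundaries. This gives breakpoints $0=s_0<s_1<\dots<s_r=1$, where each piece lies in some region $\mathcal{R}_{k_j}$. Each region is convex, so the segment meets it in one interval, and the number of pieces is at most $K$. On piece $j$ the increment is $A_{\mathcal{A}_{k_j}}(s_{j}-s_{j-1})\varepsilon\delta$.

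\textbf{Step 3 (gluing).} Summing the pieces gives
\[
\|f(x+\varepsilon\delta)-f(x)\|_2\le\sum_j (s_j-s_{j-1})\,\varepsilon\,\|A_{\mathcal{A}_{k_j}}\|_{p\to2}\|\delta\|_p\le \varepsilon\max_k\|A_{\mathcal{A}_k}\|_{p\to2}.
\]
This is a convex combination bounded by its maximum. Squaring and taking the maximum over $\delta$ yields the claimed bound, which refines Eq.~\ref{11} by replacing $\|J_f(x)\|_2$ with the worst local Jacobian over the ball.

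\textbf{Main obstacle.} The hard step is the continuity of $f$ across boundaries, which the telescoping in Step 3 requires. A true spike readout jumps when a neuron flips, so the sum would pick up additional jump terms. I would first try to handle this as the paper's phrase ``can be approximated'' suggests: interpret $f$ through the surrogate-smoothed (continuous) spike function, or absorb the jumps into the $O(\varepsilon^2)$ remainder of Eq.~\ref{11}. The latter works because, by Theorem~\ref{th2}, flips occur only for threshold-neighboring neurons, whose jump contribution is controlled. Stating this as an explicit continuity assumption is the honest fallback.
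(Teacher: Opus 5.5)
Your proof follows the paper's skeleton in Appendix~C: activation patterns cut out convex polyhedral regions, $f$ is affine with matrix $A_{\mathcal{A}}$ on each region, and this gives $\|f(x+\varepsilon\delta)-f(x)\|_2^2\le\varepsilon^2\|A_{\mathcal{A}}\|_{p\to2}^2$ inside one region. The two arguments part ways when the regions are combined. The paper derives $f(x+\varepsilon\delta)-f(x)=\varepsilon A_{\mathcal{A}}\delta$ only under the proviso $x+\varepsilon\delta\in R_{\mathcal{A}}$, the same region as $x$. It then jumps straight to $\max_{1\le k\le K}$ by ``considering all possible regions intersecting the perturbation ball.'' It never explains why a perturbation whose endpoint lies in a \emph{different} region obeys the bound. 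Your Steps~2--3 supply exactly that missing gluing argument. The segment $\{x+s\varepsilon\delta\}$ lies in $B_\varepsilon(x)$, convexity makes each region meet it in a single interval, so there are at most $K$ pieces. Telescoping with the triangle inequality then gives a convex combination of the norms $\|A_{\mathcal{A}_{k_j}}\|_{p\to2}$, which is bounded by their maximum. So your argument proves the stated inequality, whereas the paper's text really proves only the single-region case. This works exactly when $f$ is continuous across region boundaries, and you correctly identify that as the crux.

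Of the remedies in your last paragraph, only the explicit continuity hypothesis works, and it should be a hypothesis of the theorem rather than a fallback.

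\emph{Absorbing the jumps fails.} Putting the jumps into an $O(\varepsilon^2)$ remainder does not work. Flipping one binary spike changes the downstream input by a synaptic weight, an $O(1)$ amount that does not shrink with $\varepsilon$. Theorem~\ref{th2} bounds the \emph{probability} of a flip under Gaussian noise, not the size of the resulting jump. In fact, if the readout depends on $x$ only through binary spikes, then fixing the pattern makes every hidden spike vector constant. Then $f$ is piecewise constant and every $A_{\mathcal{A}_k}=0$, so the stated inequality would claim $R_{\text{adv}}=0$ even when a flip occurs inside the ball.

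\emph{Surrogate smoothing does not plug in directly.} A sigmoid-type surrogate forward pass is not piecewise affine. A hard-sigmoid one has three linear regimes per neuron, so the two-state patterns $\{(l,i):u_i\ge\theta_i\}$ no longer index the affine pieces.

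\emph{Finiteness of $K$.} You do not need Theorem~\ref{th2} here, and that theorem does not say which neurons can flip deterministically. $K$ is finite simply because there are finitely many neuron and time-step indices.
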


Theorem 2 proves that a larger $K$ expands the set of possible transformations (represented by affine matrices $A_{\mathcal{A}_k}$), thereby increasing the potential impact of adversarial perturbations on the system. As the proportion of threshold-near neurons increases, $K$ grows, leading to heightened adversarial vulnerability.
In summary, threshold-neighboring spiking neurons play a crucial role in the adversarial robustness of SNNs. To address this, we propose an optimization strategy designed to mitigate their impact, thereby strengthening the overall resilience of SNNs in adversarial environments.

\subsection{Threshold Guarding Optimization Method}
\subsubsection{Membrane Potential Constraints}

The surrogate gradients of threshold-neighboring spiking neuron, significantly influence the \( \|J_f(x)\|_2^2 \) of SNNs. To mitigate this effect, we propose additional constraints at each spiking neuron layer to optimize the membrane potential distribution, ensuring it remains as distant as possible from the threshold. 
\begin{figure*}[htpb] 
    \centering 
    \includegraphics[scale=0.43]{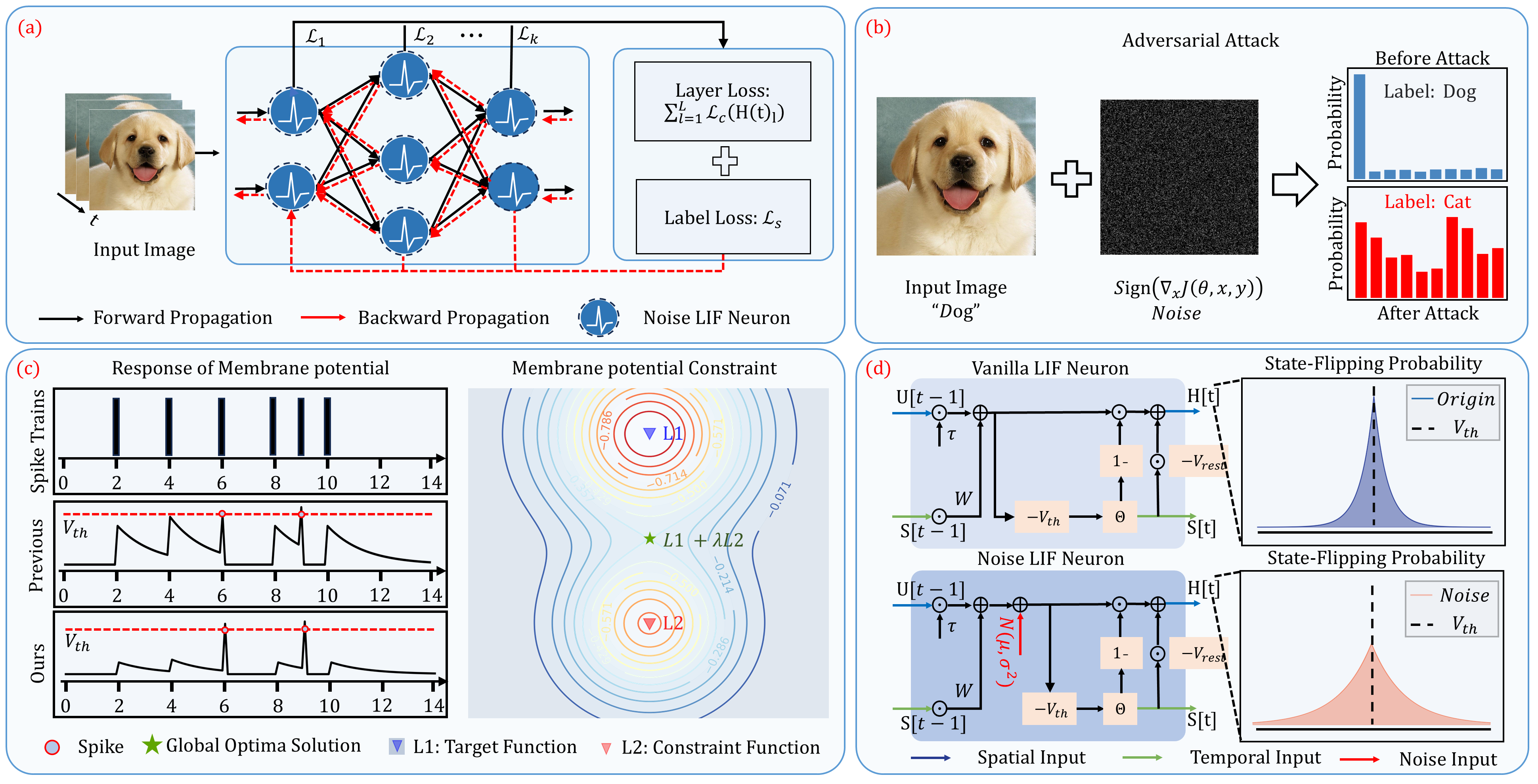}
    \centering
    \caption{Mechanism and working principle of the TGO method. (a) The TGO method combines membrane potential constraints with noisy LIF neuron models for adversarial defense. (b) Gradient-based adversarial attacks illustrate how disturbances affect input images. (c) The joint optimization of the objective and constraint functions drives neuron membrane potentials away from the firing threshold. (d) The noisy LIF model effectively reduces the probability of state flips caused by small input disturbances, enhancing model stability.}
    \label{Fig2}
\end{figure*}
The membrane potential constraint function can be described as:
\begin{equation}
    \mathcal{C}(V(t)_l) = \frac{1}{TN} \sum_{i=1}^n \max(0, \delta - \left| V(t)_i - V_{\text{th}} \right|),
\end{equation}  
$\mathcal{C}(V(t)_l)$ computes the average quadratic penalty when membrane potentials $V(t)_i$ of neurons in layer $l$ approach the firing threshold $V_{\text{th}}$. $N$ and $T$ represent the number of time steps and the total number of layers in the SNNs, respectively. The hyperparameter $\delta$ establishes a margin around $V{\text{th}}$ where proximate potentials incur proportional penalties. Subsequently, we integrate this constraint with the target loss function, defining the overall loss within the framework of Lagrangian constraints~\citep{kim2021cbp,yoo2023cbp}, which can be expressed as:
\begin{equation}
    \mathcal{L}(\mathbf{x}, \lambda) = \mathcal{L}oss(\mathbf{x}) + \lambda \sum_l \mathcal{C}(V(t)_l). 
\end{equation}
Here, \( \mathcal{L}oss(\mathbf{x}) \) is the original loss function, \(\mathcal{C}(V(t)_l)\) represents the penalty term for the membrane potentials across all layers, and \(\lambda\) is a dynamically adjusted parameter that controls the significance of the constraint. We reveal that using a fixed magnitude for \(\lambda\) hinders network convergence and constraint satisfaction. Specifically, a larger \(\lambda\) leads to significant performance degradation and poor convergence during the initial training phase, while a smaller \(\lambda\) fails to enforce the constraint effectively. Therefore, to achieve an optimal balance between gradients sparsity and performance, we propose dynamic \(\lambda\), which can be described as:
\begin{equation}
    \lambda = 0.5 \times \lambda_{\text{max}} \times \left(1 - \cos\left(\frac{\pi \times \text{epoch}}{\text{epoch}_{max}}\right)\right).
\end{equation}
In the dynamic adjustment strategy, \( \lambda \) is initially set low to allow extensive exploration of parameter space and prevent premature constraints on \( V[t] \) distributions. As \( \lambda \) increases, constraints intensify, pushing \( V[t] \) further from \( V_{\text{th}} \) and ensuring strict adherence to the operational threshold. Membrane potential constraints effectively reduce the number of threshold-neighboring neurons, thereby decreasing $|J_f(x)|_2^2 $ of SNNs. However, during training, some neurons inevitably remain near the threshold due to their significant impact on the loss function. Therefore, additional strategies are required to enhance the robustness of these critical neurons.

\subsubsection{Noisy Spike Neurons for Mitigating State-flip Probability}
As previously mentioned,  these neurons exhibit high sensitivity to minor disturbances, readily undergoing state flipping by crossing the firing threshold. Such flipping not only increases output instability but can also severely disrupt the network's overall output through cascade effects. Consequently, we introduce the Noisy-LIF neuron model~\citep{gerstner2014neuronal} as a complementary mechanism to membrane potential constraints, significantly reducing the probability of state flipping in critical neurons and thereby enhancing the robustness of directly trained SNNs against adversarial attacks. The dynamics of Noisy-LIF can be described as:
\begin{equation}
V[t] = \tau U[t-1] + W S[t]+ \xi[t],
\end{equation}
where \( \xi[t] \) denotes Gaussian white noise, $V[t] = \tau U[t-1] + W S[t]$ is the membrane potential of the LIF model before the addition of noise. In the traditional LIF model, spike generation is deterministic: neurons emit a spike whenever the membrane potential \(V[t]\) surpasses the firing threshold \( V_{\mathrm{th}} \). Details are described in Appidex.\ref{lemma: entropy_binomial}.
 As a result, even small noise perturbations can cause significant output flips if they drive the membrane potential above the threshold. By introducing $\xi[t]$, the output transitions to an expected value rather than a binary outcome. 
For small changes \(\Delta V[t]\) around \(V_{\text{th}}\), the change in spike probability, interpreted as the flipping probability, can be approximated using a Taylor expansion with the first order:
\begin{equation}
    \Delta P(S_l = 1 \mid V[t], \xi[t]) \approx \frac{1}{\sigma} \phi\left( z \right) \Delta V[t],
\end{equation}
where \(z = \frac{V_{\text{th}} - V[t] - \mu}{\sigma}\) and \(\phi(\cdot)\) represents the PDF of the standard normal distribution. Then the derivative of the approximated flipping probability \(\Delta P(S_l = 1 \mid V[t], \xi[t])\) with respect to \(\sigma\), denoted as $\frac{\partial (\Delta P)}{\partial \sigma}$ can be expressed as:
\begin{equation}
    \frac{\partial (\Delta P)}{\partial \sigma} = \Delta V[t] \frac{\phi(z)}{\sigma^2} \left(z^2 - 1\right).
\end{equation}
For values of \(V[t]\) close to \(V_{\text{th}}\), an appropriate choice of \(\sigma\) can ensure that \(z^2 < 1\). Under these conditions, \(\frac{\partial (\Delta P)}{\partial \sigma}\) is negative, implying that the flipping probability decreases monotonically with increasing \(\sigma\). This observation indicates that increasing the noise level \(\sigma\) reduces the sensitivity of the flipping probability to small perturbations in the membrane potential \(V[t]\), thereby enhancing the buffering effect of the Noisy-LIF neuron against such disturbances.
Overall, the membrane potential constraints and noisy-LIF neurons in the TGO method work synergistically. The constraints ensure that most neuronal potentials are distanced from the threshold, while the noisy-LIF neurons further reduce the probabilities of state flipping in those threshold-neighboring spiking neurons. 
This integrated approach significantly enhances the adversarial robustness of SNNs. 

\begin{table}[!ht]
    \centering
    \caption{Comparative robustness performance under different training strategies on CIFAR-10 using WRN-16. SOTA performances are marked in \colorbox{lightgray}{gray}.}
    \label{tablecri10}
    \resizebox{\textwidth}{!}{%
    \renewcommand{\arraystretch}{1.4} 
    \begin{tabular}{
      >{\centering\arraybackslash}p{0.8cm}
      >{\centering\arraybackslash}p{1.2cm}
      >{\centering\arraybackslash}p{3.6cm}
      S[table-format=2.2]
      S[table-format=2.2]
      S[table-format=2.2]
      S[table-format=2.2]
      S[table-format=2.2]
      S[table-format=2.2]
      S[table-format=2.2]
    }
        \toprule
        \multirow{2}{*}{\textbf{Model}} &
        \multirow{2}{*}{\textbf{Train}} &
        \multirow{2}{*}{\textbf{Method}} &
        \multicolumn{7}{c}{\textbf{Adversarial Attacks}} \\
        \cmidrule(lr){4-10}
        & & &
        {\textbf{Clean}} &
        {\textbf{FGSM}} &
        {\textbf{RFGSM}} &
        {\textbf{PGD7}} &
        {\textbf{PGD10}} &
        {\textbf{PGD20}} &
        {\textbf{PGD40}} \\
        \midrule

        \multirow{6}{*}{\rotatebox{90}{WRN-16}}
        & \multirow{2}{*}{BPTT}
          & Vanilla~\cite{deng2022temporal}
          & \cellcolor{lightgray}93.32 & 14.05 & 31.21 & 0.00 & 0.00 & 0.00 & 0.00 \\
        & & TGO(Ours)
          & 88.79 & \cellcolor{lightgray}51.40 & \cellcolor{lightgray}71.38
          & \cellcolor{lightgray}11.67 & \cellcolor{lightgray}6.14
          & \cellcolor{lightgray}1.52 & \cellcolor{lightgray}0.45 \\

        \cmidrule(lr){2-10}
        & \multirow{2}{*}{AT}
          & AT~\cite{kundu2021hire}
          & \cellcolor{lightgray}91.32 & 39.14 & 74.31 & 21.15 & 17.45 & 14.41 & 12.93 \\
        & & TGO(Ours)
          & 88.16 & \cellcolor{lightgray}63.03 & \cellcolor{lightgray}79.69
          & \cellcolor{lightgray}42.52 & \cellcolor{lightgray}35.01
          & \cellcolor{lightgray}24.76 & \cellcolor{lightgray}20.11 \\

        \cmidrule(lr){2-10}
        & \multirow{2}{*}{RAT}
          & RAT~\cite{ding2022snn}
          & \cellcolor{lightgray}91.44 & 42.02 & 75.89 & 23.90 & 19.81 & 16.24 & 14.18 \\
        & & TGO(Ours)
          & 87.33 & \cellcolor{lightgray}69.16 & \cellcolor{lightgray}79.28
          & \cellcolor{lightgray}54.59 & \cellcolor{lightgray}47.69
          & \cellcolor{lightgray}38.07 & \cellcolor{lightgray}33.13 \\

        \bottomrule
    \end{tabular}
    }
\end{table}
\section{Experiments}
\subsection{Experiments Setting}
In this study, the TGO method is evaluated in image classification tasks using the CIFAR-10, CIFAR-100~\citep{krizhevsky2009learning} datasets. The employed network architectures include VGG-11 and WideResNet-16 with a width of 4 and depth of 16~\citep{xufeel,ding2024robust,ding2024enhancing}. All SNN models are simulated for $T{=}4$ time steps. Unless otherwise specified, the default $\lambda_{\max}$ is set to $0.4$ for WRN-16 and $0.6$ for VGG-11.
We employ multiple adversarial attack methods, including FGSM~\citep{goodfellow2014explaining} and RFGSM~\citep{wong2020fast} and PGD~\citep{madry2017towards}. They are all assessed using a fixed attack intensity of 8/255 and a step size of 0.01, with the number of iterations specified in the attack name (e.g., PGD 10). Notably, RFGSM refers to introducing a small random disturbance samples before applying FGSM. Moreover, three training strategies are implemented to test our TGO method: first, a vanilla training strategy (BPTT)~\cite{wu2018spatio} using original images, which incurs no additional training costs; second, AT~\cite{kundu2021hire} strategy involves training with white-box PGD attacks (attack intensity of 2/255 and step size \( k = 2 \)); third, RAT incorporates Lipschitz penalties~\cite{ding2022snn} into the adversarial training.

\begin{table}[!ht]
    \centering
    \label{table1}
    \caption{Comparative results of robustness across different methods and conditions on cifar100. SOTA performances are marked in \colorbox{lightgray}{gray}.}
 \resizebox{\textwidth}{!}{%
 \renewcommand{\arraystretch}{1.25} 
    \begin{tabular}{
      >{\centering\arraybackslash}p{0.8cm}
      >{\centering\arraybackslash}p{0.8cm}
      >{\centering\arraybackslash}p{3.6cm}
      S[table-format=2.0]
      S[table-format=2.0]
      S[table-format=2.0]
      S[table-format=2.0]
      S[table-format=2.0]
      S[table-format=2.0]
      S[table-format=2.0]
    }
        \toprule
        \multirow{2}{*}{\textbf{Model}} & \multirow{2}{*}{\textbf{Train}} & \multirow{2}{*}{\textbf{Method} }
        & \multicolumn{7}{c}{\textbf{Adversarial Attacks}} \\
        \cmidrule(lr){4-10}
        & & & {\textbf{Clean}} & {\textbf{FGSM}} & {\textbf{RFGSM}} & {\textbf{PGD7}} & {\textbf{PGD10}} & {\textbf{PGD20}} & {\textbf{PGD40}} \\
        \midrule
        \multirow{10}{*}{\rotatebox{90}{VGG-11}} 
        & \multirow{4}{*}{BPTT} 
          & Vanilla~\cite{deng2022temporal} & 71.42 & 5.92 & 24.95 & 0.18 & 0.07 & 0.04 & 0.02 \\
          & & DLIF~\cite{ding2024robust}    & 70.79 & 6.95 & {-} & 0.08 & 0.05 & 0.00 & 0.00 \\
          & & StoG~\cite{ding2024enhancing} & 70.44 & 8.27 & {-} & 0.49 & {-} & {-} & {-} \\
           & & TGO(Ours) & 69.47 & \cellcolor{lightgray}17.20 & \cellcolor{lightgray}38.23 & \cellcolor{lightgray}2.30 & \cellcolor{lightgray}1.33 & \cellcolor{lightgray}0.69 & \cellcolor{lightgray}0.42 \\
          \cmidrule(lr){2-10}
          & \multirow{3}{*}{AT} 
           & AT~\cite{kundu2021hire} & 66.27 & 17.20 & 45.13& 8.30 & 9.62 & 8.16 & \cellcolor{lightgray}7.52 \\
          & & StoG~\cite{ding2024enhancing} & 66.37 & 23.45 & {-} & \cellcolor{lightgray}14.42 & {-} & {-} & {-} \\
          & &  TGO(Ours) & 65.93 & \cellcolor{lightgray}24.16 & \cellcolor{lightgray}47.90 & 12.83 & \cellcolor{lightgray}10.12 & \cellcolor{lightgray}8.72 & 6.59 \\
          \cmidrule(lr){2-10}
          & \multirow{3}{*}{RAT} 
           & RAT~\cite{ding2022snn} & 67.76 & 20.87 & 46.21 & 11.14 & 9.34 & 7.66 & 6.90 \\
          & & StoG~\cite{ding2024enhancing} & 62.26 & 33.40 & {-} & \cellcolor{lightgray}23.15 & {-} & {-} & {-} \\
          & & TGO(Ours) & 65.64 & \cellcolor{lightgray}33.84 & \cellcolor{lightgray}51.44 & 18.84 & \cellcolor{lightgray}14.59 & \cellcolor{lightgray}10.57 & \cellcolor{lightgray}8.90 \\
         \midrule
        
         \multirow{11}{*}{\rotatebox{90}{WRN-16}} 
        & \multirow{4}{*}{BPTT} 
           & Vanilla~\cite{deng2022temporal} & 73.46 & 6.36 & 11.15 & 0.01 & 0.00 & 0.00 & 0.00 \\
         & & DLIF~\cite{ding2024robust} & \cellcolor{lightgray}73.85 & 8.08 & {-} & 0.00 & 0.00 & 0.00 & 0.00 \\
         & & SR~\cite{ding2024enhancing} & 69.15 & 9.84 & 34.43 & 1.22 & 0.86 & 0.52 & 0.3 \\
          & & TGO(Ours) & 69.04 & \cellcolor{lightgray}23.90 & \cellcolor{lightgray}41.17 & \cellcolor{lightgray}3.26 & \cellcolor{lightgray}1.84 & \cellcolor{lightgray}0.67 & \cellcolor{lightgray}0.35 \\
          
        \cmidrule(lr){2-10}
        & \multirow{4}{*}{AT} 
           & AT~\cite{kundu2021hire} & \cellcolor{lightgray}68.57 & 21.18 & 46.60 & 11.14 & 9.22 & 7.50 & 6.72 \\
          & & DLIF~\cite{ding2024robust} & 65.86 & 25.90 & {-} & 15.20 & 14.03 & 13.37 & \cellcolor{lightgray}13.30 \\
          & &SR~\cite{ding2024enhancing} & 62.8 & 28.27 & 48.91 & 21.93 & 18.46 & 12.28 & 9.78 \\
           & & TGO(Ours) & 64.49 &  \cellcolor{lightgray}41.99 & \cellcolor{lightgray}55.01 &  \cellcolor{lightgray}27.78 &  \cellcolor{lightgray}22.75 & \cellcolor{lightgray} 15.99 &  12.91 \\
        \cmidrule(lr){2-10}
        & \multirow{3}{*}{RAT} 
           & RAT~\cite{ding2022snn} & \cellcolor{lightgray}67.59 & 25.07 & 48.92 & 13.60 & 11.41 & 9.07 & 8.35 \\
           & & DLIF~\cite{ding2024robust} & 66.57 & 33.05 & {-} & 18.75 & 16.23 & 12.16 & 9.44  \\
           
           & & TGO(Ours) & 64.22 & \cellcolor{lightgray}40.35 & \cellcolor{lightgray}54.53 & \cellcolor{lightgray}25.79 & \cellcolor{lightgray}20.93 & \cellcolor{lightgray}14.02 & \cellcolor{lightgray}10.77 \\
          
        \bottomrule
    \end{tabular}
    }
    \label{table1}
\end{table}

\subsection{Compare with the SOTA methods}
\begin{wrapfigure}{}{0.6\textwidth}  
    \centering
    \includegraphics[width=0.6\textwidth]{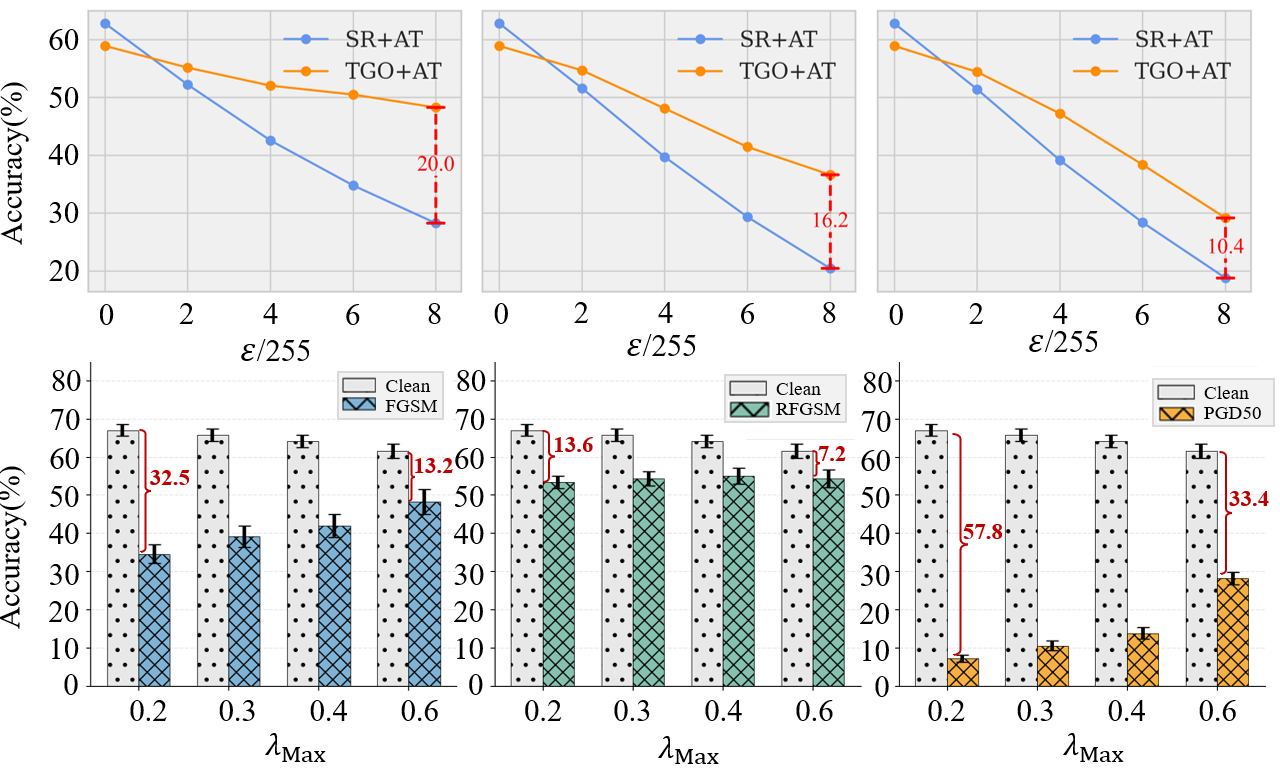}  
    \caption{Performance comparison of TGO (ours) and SR with AT across different perturbation budgets $\epsilon$ and different $\lambda_{\text{max}}$. Experiments are conducted on CIFAR-100 dataset using WRN-16 architecture.}
    \label{Fig3}
    \vspace{-10pt}
\end{wrapfigure}
To evaluate the effectiveness of the proposed TGO method, we implement three training strategies (BPTT, AT, and RAT) and compare them with other SOTA robustness methods. Specifically, we replicate the AT and RAT configurations. Table.\ref{tablecri10} and Table.\ref{table1} reports the classification accuracy of various network architectures under different attack scenarios on the CIFAR-10 and CIFAR-100 datasets. Our results show that the TGO method consistently achieves SOTA performance across nearly all tested architectures. Significantly, under the BPTT training strategy, our TGO method enhances performance by 10-20\% in FGSM and RFGSM attack scenarios. It outperforms other robustness methods. 

Additionally, the TGO strategy is highly compatible with both AT and RAT approaches, achieving approximately 10\% performance improvement under PGD10, PGD20, and PGD40 attacks in the WRN-16 model. Similar to other constraint-based approaches, our method achieving a 3\%-5\% increase in performance under adversarial attacks. 
We conducted experiments across various perturbation magnitudes $\epsilon$ (2, 4, 6, 8/255) and different $\lambda_{\text{max}}$ values (0.2, 0.3, 0.4, 0.6). As shown in Fig.~\ref{Fig3}, TGO consistently outperforms SR across all perturbation levels, demonstrating its superior robustness. Additionally, increasing $\lambda_{\text{max}}$ results in a trade-off: while clean accuracy decreases, robustness improves, highlighting the balance between accuracy and robustness.

Moreover, we evaluated the TGO method against more advanced attack strategies: Auto-PGD (APGD)~\citep{croce2020reliable} and Multi-Targeted PGD (MTPGD)~\citep{gowal2019alternative}. MTPGD presents unique challenges due to its multi-targeted nature, simultaneously considering multiple misclassification objectives. We implemented MTPGD with random initialization to avoid gradient masking and gradient clipping for stable optimization. 
We take comparative analysis of our TGO method and the SR method, using consistent hyperparameter settings (momentum=0.9, $\epsilon$=8/255). The incorporation of noisy-LIF in TGO introduces randomness, thereby enhancing its adversarial robustness with inherent variability. We further evaluate TGO's performance under the EoT constraint~\cite{athalye2018synthesizing}. As shown in Table~\ref{tab:attack}, TGO outperforms SR under both MTPGD and APGD attacks even with the EoT constraint. Results across all attack scenarios conclusively demonstrate TGO's significant robustness improvements against diverse adversarial threats.

\begin{figure}[htpb] 
    \centering 
    \includegraphics[scale=0.39]{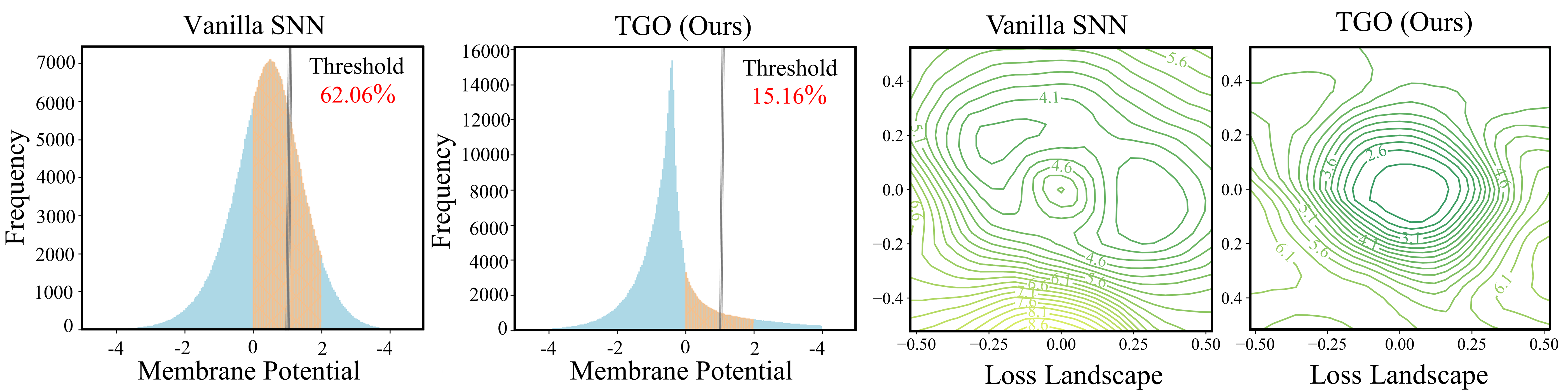}
    \centering
    \caption{ Comparison of membrane potential distributions and loss landscapes: The TGO-optimized SNN decreases membrane potentials near the threshold by approximately 40\% and effectively circumvents adversarial traps during RFGSM attacks. }
    \label{Fig3}
\end{figure}

\subsection{Ablation Study}
\label{AblationStudy}
\begin{wraptable}{r}{0.55\textwidth}
\vspace{-10pt}
\centering
\caption{Performance on MTPGD~\citep{gowal2019alternative} and APGD~\citep{croce2020reliable} with EoT~\cite{athalye2018synthesizing} on CIFAR-100 (WRN-16); APGD uses CE loss; EoT repeats $N{=}3$.}
\label{tab:attack}
\small   
\setlength{\tabcolsep}{4pt}
\renewcommand{\arraystretch}{1.25}
\begin{tabular}{@{}lcccc@{}}
\toprule
\multicolumn{5}{c}{\textbf{MTPGD Attack}} \\
\midrule
\textbf{Model} & \textbf{7} & \textbf{10} & \textbf{20} & \textbf{40} \\
\midrule
AT~\cite{kundu2021hire}        & 10.01 & 7.56 & 5.66 & 3.92  \\
SR+AT~\cite{ding2024enhancing} & 16.88 & 13.67 & 9.52 & 7.33 \\
\rowcolor[gray]{0.92}
TGO+AT(EoT)   & \textbf{21.23} & \textbf{16.35} & \textbf{10.58} & \textbf{7.40} \\
\midrule
\multicolumn{5}{c}{\textbf{APGD Attack}} \\
\midrule
\textbf{Model} & \textbf{7} & \textbf{10} & \textbf{20} & \textbf{40} \\
\midrule
AT~\cite{kundu2021hire}        & 9.34  & 6.85  & 4.34  & 3.62  \\
SR+AT~\cite{ding2024enhancing} & 14.48 & 12.78 & 8.83 & 7.20 \\
\rowcolor[gray]{0.92}
TGO+AT(EoT)   & \textbf{18.93} & \textbf{14.32} & \textbf{9.71} & \textbf{7.53} \\
\bottomrule
\end{tabular}
\vspace{-10pt}
\end{wraptable}
In this study, we evaluate the two core components of the TGO method: membrane potential constraint (MC) and the noisy-LIF (NLIF) model through a series of ablation experiments. The experiments are performed using two training strategies: vanilla BPTT and RAT, with the CIFAR-100 dataset and VGG-11 architecture ensuring result reliability. 

As shown in Table.\ref{tab:abla}, our experimental results reveal several key findings. First, the vanilla SNN exhibits significant vulnerability to FGSM attacks, achieving only 5.92\% classification accuracy, highlighting the urgent need to improve its adversarial robustness. Second, each individual component of TGO significantly improves network performance in adversarial attacks, confirming their efficacy in strengthening the robustness of SNNs. Notably, MC and NLIF enhance each other synergistically rather than functioning independently, which further confirms TGO is a holistic protection strategy specifically targeting neurons near the threshold.
\begin{table}[!ht]
  \centering
  \caption{Ablation study of our TGO method on CIFAR-100 with VGG-11.}
  \label{tab:ablation_tgo_merged}
  \resizebox{\textwidth}{!}{%
  \setlength{\tabcolsep}{3pt}
  \begin{tabular}{@{}cccccccccc@{}}
    \toprule
    \multicolumn{2}{c}{} & \multicolumn{4}{c}{\textbf{BPTT}} & \multicolumn{4}{c}{\textbf{RAT}} \\
    \cmidrule(r){3-6} \cmidrule(l){7-10}
    \textbf{MC} & \textbf{NLIF} & \textbf{Clean} & \textbf{FGSM} & \textbf{RFGSM} & \textbf{PGD40} & \textbf{Clean} & \textbf{FGSM} & \textbf{RFGSM} & \textbf{PGD40} \\
    \cmidrule(r){1-6} \cmidrule(l){7-10}
    \ding{55} & \ding{55} & 71.4 & 5.9 & 25.0 & 0.0 & 67.8 & 20.9 & 46.2 & 6.9 \\
    \ding{51} & \ding{55} & 64.3 (-7.2) & 17.1 (+11.2) & 25.9 (+0.9) & 0.5 (+0.5) & 61.4 (-6.4) & 26.2 (+5.3) & 42.7 (-3.5) & 6.2 (-0.7) \\
    \ding{55} & \ding{51} & 70.6 (-0.8) & 8.1 (+2.1) & 31.7 (+6.6) & 0.1 (+0.1) & 68.1 (+0.4) & 25.2 (+4.3) &50.1 (+3.9) & 9.1 (+2.2) \\
    \ding{51} & \ding{51} & 66.9 (-4.6) & \textcolor{purple}{21.5 (+15.5)} & \textcolor{purple}{39.1 (+14.1)} & \textcolor{purple}{0.5 (+0.5)} & 63.3 (-3.8) & \textcolor{purple}{33.8 (+13.0)} &  \textcolor{purple}{50.8 (+4.6)} & \textcolor{purple}{9.3 (+2.4)} \\
    \bottomrule
  \end{tabular}%
  }
  \vspace{-10pt}
  \label{tab:abla}
\end{table}

To better understand TGO's regulatory mechanism on the dynamics of spiking neurons, we analyze the membrane potential distributions of both vanilla and TGO-optimized SNNs. As shown in Fig.\ref{Fig3}, TGO reduces the number of threshold-neighboring neurons by 40\%, strongly supporting our hypothesis that threshold-neighboring neurons are key to adversarial robustness in SNNs.

Additionally, we compare the loss landscapes of vanilla and TGO-optimized SNNs under RFGSM attacks with the BPTT training strategy. The loss landscape of our TGO method exhibits smoother gradient trajectories, while the vanilla SNNs show multiple local optima and peaks, demonstrating the effectiveness of TGO in defending against gradient-based adversarial attacks.  Moreover, we present heatmaps of \(\nabla_x f_y\) for CIFAR-100 examples~\cite{tsipras2018robustness,liuenhancing}. As shown in Fig.\ref{xxxxxxxxx}, the TGO method produces a sparser gradient, with heatmap contours more closely to the original image than the vanilla SNNs. 

\begin{figure*}[htpb] 
    \centering 
    \includegraphics[scale=0.49]{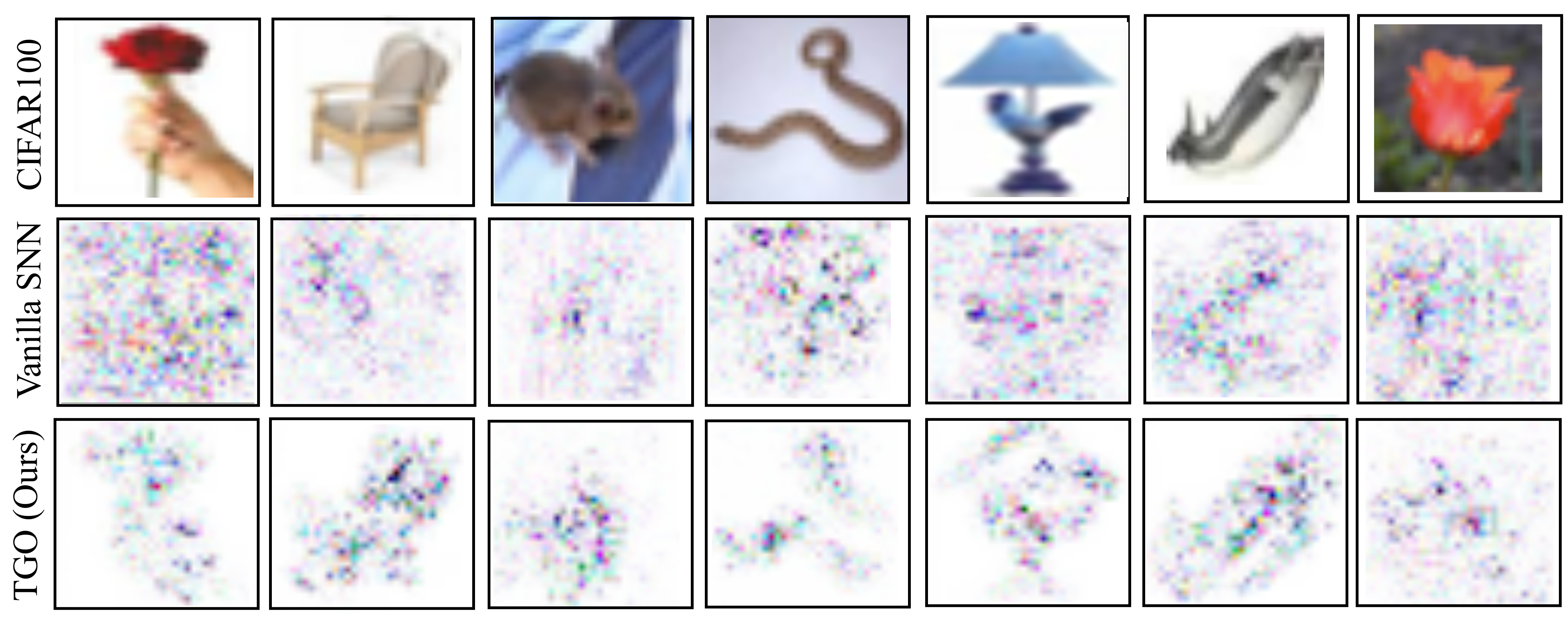}
    \centering
    \caption{Heatmaps of \(\nabla_x f_y\), where \(f\) denotes a vanilla SNN or our TGO-optimized SNN.}
    \label{xxxxxxxxx}
\end{figure*}

\section{Conclusion}
This study thoroughly analyzes the vulnerabilities of directly trained SNNs under adversarial attack conditions and theoretically confirms that threshold-neighboring spiking neurons define the upper limits of adversarial attack effectiveness. To address this issue, we propose a TGO method, which consists of two aspects. First, membrane potential constraints distance neurons from their thresholds, thereby reducing the upper limits of adversarial attacks. Second, noisy-LIF model transitions the neuronal firing mechanism from deterministic to probabilistic, effectively reducing the probability of state flips caused by minor disturbances. Extensive experiments prove that our TGO method significantly enhances the robustness of directly trained SNNs against various adversarial attacks. 

\section{Acknowledgments}
This work is supported in part by the National Natural Science Foundation of China (No. 62220106008 and 62271432), in part by Sichuan Province Innovative Talent Funding Project for Postdoctoral Fellows (BX202405), in part by the Shenzhen Science and Technology Program (Shenzhen Key Laboratory, Grant No. ZDSYS20230626091302006), in part by the Program for Guangdong Introducing Innovative and Entrepreneurial Teams, Grant No. 2023ZT10X044, and in part by the State Key Laboratory of Brain Cognition and Brain-inspired Intelligence Technology, Grant No. SKLBI-K2025010.

\newpage
\bibliography{iclr2026_conference}

\begin{thebibliography}{79}
\providecommand{\natexlab}[1]{#1}
\providecommand{\url}[1]{\texttt{#1}}
\expandafter\ifx\csname urlstyle\endcsname\relax
  \providecommand{\doi}[1]{doi: #1}\else
  \providecommand{\doi}{doi: \begingroup \urlstyle{rm}\Url}\fi

\bibitem[Athalye et~al.(2018)Athalye, Engstrom, Ilyas, and Kwok]{athalye2018synthesizing}
Anish Athalye, Logan Engstrom, Andrew Ilyas, and Kevin Kwok.
\newblock Synthesizing robust adversarial examples.
\newblock In \emph{International conference on machine learning}, pp.\  284--293. PMLR, 2018.

\bibitem[Bu et~al.(2022)Bu, Ding, Yu, and Huang]{bu2022optimized}
Tong Bu, Jianhao Ding, Zhaofei Yu, and Tiejun Huang.
\newblock Optimized potential initialization for low-latency spiking neural networks.
\newblock In \emph{Proceedings of the AAAI conference on artificial intelligence}, volume~36, pp.\  11--20, 2022.

\bibitem[Cao et~al.(2020)Cao, Liu, Meng, and Sun]{cao2020overview}
Keyan Cao, Yefan Liu, Gongjie Meng, and Qimeng Sun.
\newblock An overview on edge computing research.
\newblock \emph{IEEE access}, 8:\penalty0 85714--85728, 2020.

\bibitem[Croce \& Hein(2020)Croce and Hein]{croce2020reliable}
Francesco Croce and Matthias Hein.
\newblock Reliable evaluation of adversarial robustness with an ensemble of diverse parameter-free attacks.
\newblock In \emph{International conference on machine learning}, pp.\  2206--2216. PMLR, 2020.

\bibitem[DeBole et~al.(2019)DeBole, Taba, Amir, Akopyan, Andreopoulos, Risk, Kusnitz, Otero, Nayak, Appuswamy, et~al.]{debole2019truenorth}
Michael~V DeBole, Brian Taba, Arnon Amir, Filipp Akopyan, Alexander Andreopoulos, William~P Risk, Jeff Kusnitz, Carlos~Ortega Otero, Tapan~K Nayak, Rathinakumar Appuswamy, et~al.
\newblock Truenorth: Accelerating from zero to 64 million neurons in 10 years.
\newblock \emph{Computer}, 52\penalty0 (5):\penalty0 20--29, 2019.

\bibitem[Deng et~al.(2022)Deng, Li, Zhang, and Gu]{deng2022temporal}
Shikuang Deng, Yuhang Li, Shanghang Zhang, and Shi Gu.
\newblock Temporal efficient training of spiking neural network via gradient re-weighting.
\newblock \emph{arXiv preprint arXiv:2202.11946}, 2022.

\bibitem[Deng et~al.(2023)Deng, Lin, Li, and Gu]{deng2023surrogate}
Shikuang Deng, Hao Lin, Yuhang Li, and Shi Gu.
\newblock Surrogate module learning: Reduce the gradient error accumulation in training spiking neural networks.
\newblock In \emph{International Conference on Machine Learning}, pp.\  7645--7657. PMLR, 2023.

\bibitem[Ding et~al.(2022)Ding, Bu, Yu, Huang, and Liu]{ding2022snn}
Jianhao Ding, Tong Bu, Zhaofei Yu, Tiejun Huang, and Jian Liu.
\newblock Snn-rat: Robustness-enhanced spiking neural network through regularized adversarial training.
\newblock \emph{Advances in Neural Information Processing Systems}, 35:\penalty0 24780--24793, 2022.

\bibitem[Ding et~al.(2023)Ding, Yu, Huang, and Liu]{ding2023spike}
Jianhao Ding, Zhaofei Yu, Tiejun Huang, and Jian~K Liu.
\newblock Spike timing reshapes robustness against attacks in spiking neural networks.
\newblock \emph{arXiv preprint arXiv:2306.05654}, 2023.

\bibitem[Ding et~al.(2024{\natexlab{a}})Ding, Pan, Liu, Yu, and Huang]{ding2024robust}
Jianhao Ding, Zhiyu Pan, Yujia Liu, Zhaofei Yu, and Tiejun Huang.
\newblock Robust stable spiking neural networks.
\newblock \emph{arXiv preprint arXiv:2405.20694}, 2024{\natexlab{a}}.

\bibitem[Ding et~al.(2024{\natexlab{b}})Ding, Yu, Huang, and Liu]{ding2024enhancing}
Jianhao Ding, Zhaofei Yu, Tiejun Huang, and Jian~K Liu.
\newblock Enhancing the robustness of spiking neural networks with stochastic gating mechanisms.
\newblock In \emph{Proceedings of the AAAI Conference on Artificial Intelligence}, volume~38, pp.\  492--502, 2024{\natexlab{b}}.

\bibitem[Duan et~al.(2022)Duan, Ding, Chen, Yu, and Huang]{duan2022temporal}
Chaoteng Duan, Jianhao Ding, Shiyan Chen, Zhaofei Yu, and Tiejun Huang.
\newblock Temporal effective batch normalization in spiking neural networks.
\newblock \emph{Advances in Neural Information Processing Systems}, 35:\penalty0 34377--34390, 2022.

\bibitem[El-Allami et~al.(2021)El-Allami, Marchisio, Shafique, and Alouani]{el2021securing}
Rida El-Allami, Alberto Marchisio, Muhammad Shafique, and Ihsen Alouani.
\newblock Securing deep spiking neural networks against adversarial attacks through inherent structural parameters.
\newblock In \emph{2021 Design, Automation \& Test in Europe Conference \& Exhibition (DATE)}, pp.\  774--779. IEEE, 2021.

\bibitem[Esser et~al.(2016)Esser, Merolla, Arthur, Cassidy, Appuswamy, Andreopoulos, Berg, McKinstry, Melano, Barch, et~al.]{esser2016cover}
Steven~K Esser, Paul~A Merolla, John~V Arthur, Andrew~S Cassidy, Rathinakumar Appuswamy, Alexander Andreopoulos, David~J Berg, Jeffrey~L McKinstry, Timothy Melano, Davis~R Barch, et~al.
\newblock From the cover: Convolutional networks for fast, energy-efficient neuromorphic computing.
\newblock \emph{Proceedings of the National Academy of Sciences of the United States of America}, 113\penalty0 (41):\penalty0 11441, 2016.

\bibitem[Fang et~al.(2021{\natexlab{a}})Fang, Yu, Chen, Huang, Masquelier, and Tian]{fang2021deep}
Wei Fang, Zhaofei Yu, Yanqi Chen, Tiejun Huang, Timoth{\'e}e Masquelier, and Yonghong Tian.
\newblock Deep residual learning in spiking neural networks.
\newblock \emph{Advances in Neural Information Processing Systems}, 34:\penalty0 21056--21069, 2021{\natexlab{a}}.

\bibitem[Fang et~al.(2021{\natexlab{b}})Fang, Yu, Chen, Masquelier, Huang, and Tian]{fang2021incorporating}
Wei Fang, Zhaofei Yu, Yanqi Chen, Timoth{\'e}e Masquelier, Tiejun Huang, and Yonghong Tian.
\newblock Incorporating learnable membrane time constant to enhance learning of spiking neural networks.
\newblock In \emph{Proceedings of the IEEE/CVF international conference on computer vision}, pp.\  2661--2671, 2021{\natexlab{b}}.

\bibitem[Fang et~al.(2024)Fang, Yu, Zhou, Chen, Chen, Ma, Masquelier, and Tian]{fang2024parallel}
Wei Fang, Zhaofei Yu, Zhaokun Zhou, Ding Chen, Yanqi Chen, Zhengyu Ma, Timoth{\'e}e Masquelier, and Yonghong Tian.
\newblock Parallel spiking neurons with high efficiency and ability to learn long-term dependencies.
\newblock \emph{Advances in Neural Information Processing Systems}, 36, 2024.

\bibitem[Finlayson et~al.(2019)Finlayson, Bowers, Ito, Zittrain, Beam, and Kohane]{finlayson2019adversarial}
Samuel~G Finlayson, John~D Bowers, Joichi Ito, Jonathan~L Zittrain, Andrew~L Beam, and Isaac~S Kohane.
\newblock Adversarial attacks on medical machine learning.
\newblock \emph{Science}, 363\penalty0 (6433):\penalty0 1287--1289, 2019.

\bibitem[Gerstner \& Kistler(2002)Gerstner and Kistler]{gerstner2002spiking}
Wulfram Gerstner and Werner~M Kistler.
\newblock \emph{Spiking neuron models: Single neurons, populations, plasticity}.
\newblock Cambridge university press, 2002.

\bibitem[Gerstner et~al.(2014)Gerstner, Kistler, Naud, and Paninski]{gerstner2014neuronal}
Wulfram Gerstner, Werner~M Kistler, Richard Naud, and Liam Paninski.
\newblock \emph{Neuronal dynamics: From single neurons to networks and models of cognition}.
\newblock Cambridge University Press, 2014.

\bibitem[Goodfellow et~al.(2014)Goodfellow, Shlens, and Szegedy]{goodfellow2014explaining}
Ian~J Goodfellow, Jonathon Shlens, and Christian Szegedy.
\newblock Explaining and harnessing adversarial examples.
\newblock \emph{arXiv preprint arXiv:1412.6572}, 2014.

\bibitem[Gowal et~al.(2019)Gowal, Uesato, Qin, Huang, Mann, and Kohli]{gowal2019alternative}
Sven Gowal, Jonathan Uesato, Chongli Qin, Po-Sen Huang, Timothy Mann, and Pushmeet Kohli.
\newblock An alternative surrogate loss for pgd-based adversarial testing.
\newblock \emph{arXiv preprint arXiv:1910.09338}, 2019.

\bibitem[Graves \& Graves(2012)Graves and Graves]{graves2012long}
Alex Graves and Alex Graves.
\newblock Long short-term memory.
\newblock \emph{Supervised sequence labelling with recurrent neural networks}, pp.\  37--45, 2012.

\bibitem[Hao et~al.(2020)Hao, Huang, Dong, and Xu]{hao2020biologically}
Yunzhe Hao, Xuhui Huang, Meng Dong, and Bo~Xu.
\newblock A biologically plausible supervised learning method for spiking neural networks using the symmetric stdp rule.
\newblock \emph{Neural Networks}, 121:\penalty0 387--395, 2020.

\bibitem[Hao et~al.(2023)Hao, Bu, Shi, Huang, Yu, and Huang]{hao2023threaten}
Zecheng Hao, Tong Bu, Xinyu Shi, Zihan Huang, Zhaofei Yu, and Tiejun Huang.
\newblock Threaten spiking neural networks through combining rate and temporal information.
\newblock In \emph{The Twelfth International Conference on Learning Representations}, 2023.

\bibitem[He et~al.(2016)He, Zhang, Ren, and Sun]{he2016deep}
Kaiming He, Xiangyu Zhang, Shaoqing Ren, and Jian Sun.
\newblock Deep residual learning for image recognition.
\newblock In \emph{Proceedings of the IEEE conference on computer vision and pattern recognition}, pp.\  770--778, 2016.

\bibitem[Ho et~al.(2022)Ho, Lee, and Kang]{ho2022attack}
Jiacang Ho, Byung-Gook Lee, and Dae-Ki Kang.
\newblock Attack-less adversarial training for a robust adversarial defense.
\newblock \emph{Applied Intelligence}, 52\penalty0 (4):\penalty0 4364--4381, 2022.

\bibitem[Hu et~al.(2021)Hu, Tang, and Pan]{hu2021spiking}
Yangfan Hu, Huajin Tang, and Gang Pan.
\newblock Spiking deep residual networks.
\newblock \emph{IEEE Transactions on Neural Networks and Learning Systems}, 34\penalty0 (8):\penalty0 5200--5205, 2021.

\bibitem[Izhikevich(2003)]{izhikevich2003simple}
Eugene~M Izhikevich.
\newblock Simple model of spiking neurons.
\newblock \emph{IEEE Transactions on neural networks}, 14\penalty0 (6):\penalty0 1569--1572, 2003.

\bibitem[Jiang et~al.(2025)Jiang, Jiang, Yan, and Tang]{jiang2025mpd}
Runhao Jiang, Chengzhi Jiang, Rui Yan, and Huajin Tang.
\newblock Mpd-sgr: Robust spiking neural networks with membrane potential distribution-driven surrogate gradient regularization.
\newblock \emph{arXiv preprint arXiv:2511.12199}, 2025.

\bibitem[Kim \& Jeong(2021)Kim and Jeong]{kim2021cbp}
Guhyun Kim and Doo~Seok Jeong.
\newblock Cbp: backpropagation with constraint on weight precision using a pseudo-lagrange multiplier method.
\newblock \emph{Advances in Neural Information Processing Systems}, 34:\penalty0 28274--28285, 2021.

\bibitem[Krizhevsky et~al.(2009)Krizhevsky, Hinton, et~al.]{krizhevsky2009learning}
Alex Krizhevsky, Geoffrey Hinton, et~al.
\newblock Learning multiple layers of features from tiny images.
\newblock 2009.

\bibitem[Kundu et~al.(2021)Kundu, Pedram, and Beerel]{kundu2021hire}
Souvik Kundu, Massoud Pedram, and Peter~A Beerel.
\newblock Hire-snn: Harnessing the inherent robustness of energy-efficient deep spiking neural networks by training with crafted input noise.
\newblock In \emph{Proceedings of the IEEE/CVF International Conference on Computer Vision}, pp.\  5209--5218, 2021.

\bibitem[Laskov \& Lippmann(2010)Laskov and Lippmann]{laskov2010machine}
Pavel Laskov and Richard Lippmann.
\newblock Machine learning in adversarial environments, 2010.

\bibitem[Lee et~al.(2020)Lee, Sarwar, Panda, Srinivasan, and Roy]{lee2020enabling}
Chankyu Lee, Syed~Shakib Sarwar, Priyadarshini Panda, Gopalakrishnan Srinivasan, and Kaushik Roy.
\newblock Enabling spike-based backpropagation for training deep neural network architectures.
\newblock \emph{Frontiers in neuroscience}, 14:\penalty0 497482, 2020.

\bibitem[Li et~al.(2017)Li, Liu, Ji, Li, and Shi]{li2017cifar10}
Hongmin Li, Hanchao Liu, Xiangyang Ji, Guoqi Li, and Luping Shi.
\newblock Cifar10-dvs: an event-stream dataset for object classification.
\newblock \emph{Frontiers in neuroscience}, 11:\penalty0 309, 2017.

\bibitem[Li et~al.(2022)Li, Lei, and Yang]{li2022spikeformer}
Yudong Li, Yunlin Lei, and Xu~Yang.
\newblock Spikeformer: a novel architecture for training high-performance low-latency spiking neural network.
\newblock \emph{arXiv preprint arXiv:2211.10686}, 2022.

\bibitem[Li et~al.(2021)Li, Guo, Zhang, Deng, Hai, and Gu]{li2021differentiable}
Yuhang Li, Yufei Guo, Shanghang Zhang, Shikuang Deng, Yongqing Hai, and Shi Gu.
\newblock Differentiable spike: Rethinking gradient-descent for training spiking neural networks.
\newblock \emph{Advances in Neural Information Processing Systems}, 34:\penalty0 23426--23439, 2021.

\bibitem[Liang et~al.(2021)Liang, Hu, Deng, Wu, Li, Ding, Li, and Xie]{liang2021exploring}
Ling Liang, Xing Hu, Lei Deng, Yujie Wu, Guoqi Li, Yufei Ding, Peng Li, and Yuan Xie.
\newblock Exploring adversarial attack in spiking neural networks with spike-compatible gradient.
\newblock \emph{IEEE transactions on neural networks and learning systems}, 34\penalty0 (5):\penalty0 2569--2583, 2021.

\bibitem[Liang et~al.(2022)Liang, Xu, Hu, Deng, and Xie]{liang2022toward}
Ling Liang, Kaidi Xu, Xing Hu, Lei Deng, and Yuan Xie.
\newblock Toward robust spiking neural network against adversarial perturbation.
\newblock \emph{Advances in Neural Information Processing Systems}, 35:\penalty0 10244--10256, 2022.

\bibitem[Liang et~al.(2025)Liang, Yang, Wei, Belatreche, Wang, Zhang, and Yang]{liang2025bso}
Yu~Liang, Yu~Yang, Wenjie Wei, Ammar Belatreche, Shuai Wang, Malu Zhang, and Yang Yang.
\newblock Bso: Binary spiking online optimization algorithm.
\newblock \emph{arXiv preprint arXiv:2511.12502}, 2025.

\bibitem[Lipton(2015)]{lipton2015critical}
Zachary~Chase Lipton.
\newblock A critical review of recurrent neural networks for sequence learning.
\newblock \emph{arXiv Preprint, CoRR, abs/1506.00019}, 2015.

\bibitem[Liu \& Yue(2018)Liu and Yue]{liu2018event}
Daqi Liu and Shigang Yue.
\newblock Event-driven continuous stdp learning with deep structure for visual pattern recognition.
\newblock \emph{IEEE transactions on cybernetics}, 49\penalty0 (4):\penalty0 1377--1390, 2018.

\bibitem[Liu et~al.(2024)Liu, Bu, Ding, Hao, Huang, and Yu]{liuenhancing}
Yujia Liu, Tong Bu, Jianhao Ding, Zecheng Hao, Tiejun Huang, and Zhaofei Yu.
\newblock Enhancing adversarial robustness in snns with sparse gradients.
\newblock In \emph{Forty-first International Conference on Machine Learning}, 2024.

\bibitem[Ma et~al.(2024)Ma, Jin, Sun, Li, Wu, Hu, Yang, Tang, Zhu, Lin, et~al.]{ma2024darwin3}
De~Ma, Xiaofei Jin, Shichun Sun, Yitao Li, Xundong Wu, Youneng Hu, Fangchao Yang, Huajin Tang, Xiaolei Zhu, Peng Lin, et~al.
\newblock Darwin3: a large-scale neuromorphic chip with a novel isa and on-chip learning.
\newblock \emph{National Science Review}, 11\penalty0 (5):\penalty0 nwae102, 2024.

\bibitem[Maass(1997)]{maass1997networks}
Wolfgang Maass.
\newblock Networks of spiking neurons: the third generation of neural network models.
\newblock \emph{Neural networks}, 10\penalty0 (9):\penalty0 1659--1671, 1997.

\bibitem[Madry(2017)]{madry2017towards}
Aleksander Madry.
\newblock Towards deep learning models resistant to adversarial attacks.
\newblock \emph{arXiv preprint arXiv:1706.06083}, 2017.

\bibitem[Marchisio et~al.(2020)Marchisio, Nanfa, Khalid, Hanif, Martina, and Shafique]{marchisio2020spiking}
Alberto Marchisio, Giorgio Nanfa, Faiq Khalid, Muhammad~Abdullah Hanif, Maurizio Martina, and Muhammad Shafique.
\newblock Is spiking secure? a comparative study on the security vulnerabilities of spiking and deep neural networks.
\newblock In \emph{2020 International Joint Conference on Neural Networks (IJCNN)}, pp.\  1--8. IEEE, 2020.

\bibitem[Masquelier et~al.(2008)Masquelier, Guyonneau, and Thorpe]{masquelier2008spike}
Timoth{\'e}e Masquelier, Rudy Guyonneau, and Simon~J Thorpe.
\newblock Spike timing dependent plasticity finds the start of repeating patterns in continuous spike trains.
\newblock \emph{PloS one}, 3\penalty0 (1):\penalty0 e1377, 2008.

\bibitem[Mukhoty et~al.(2024)Mukhoty, AlQuabeh, De~Masi, Xiong, and Gu]{mukhotycertified}
Bhaskar Mukhoty, Hilal AlQuabeh, Giulia De~Masi, Huan Xiong, and Bin Gu.
\newblock Certified adversarial robustness for rate encoded spiking neural networks.
\newblock In \emph{The Twelfth International Conference on Learning Representations}, 2024.

\bibitem[Pei et~al.(2019)Pei, Deng, Song, Zhao, Zhang, Wu, Wang, Zou, Wu, He, et~al.]{pei2019towards}
Jing Pei, Lei Deng, Sen Song, Mingguo Zhao, Youhui Zhang, Shuang Wu, Guanrui Wang, Zhe Zou, Zhenzhi Wu, Wei He, et~al.
\newblock Towards artificial general intelligence with hybrid tianjic chip architecture.
\newblock \emph{Nature}, 572\penalty0 (7767):\penalty0 106--111, 2019.

\bibitem[Rathi \& Roy(2020)Rathi and Roy]{rathi2020diet}
Nitin Rathi and Kaushik Roy.
\newblock Diet-snn: Direct input encoding with leakage and threshold optimization in deep spiking neural networks.
\newblock \emph{arXiv preprint arXiv:2008.03658}, 2020.

\bibitem[Shafahi et~al.(2019)Shafahi, Najibi, Ghiasi, Xu, Dickerson, Studer, Davis, Taylor, and Goldstein]{shafahi2019adversarial}
Ali Shafahi, Mahyar Najibi, Mohammad~Amin Ghiasi, Zheng Xu, John Dickerson, Christoph Studer, Larry~S Davis, Gavin Taylor, and Tom Goldstein.
\newblock Adversarial training for free!
\newblock \emph{Advances in neural information processing systems}, 32, 2019.

\bibitem[Sharmin et~al.(2020)Sharmin, Rathi, Panda, and Roy]{sharmin2020inherent}
Saima Sharmin, Nitin Rathi, Priyadarshini Panda, and Kaushik Roy.
\newblock Inherent adversarial robustness of deep spiking neural networks: Effects of discrete input encoding and non-linear activations.
\newblock In \emph{Computer Vision--ECCV 2020: 16th European Conference, Glasgow, UK, August 23--28, 2020, Proceedings, Part XXIX 16}, pp.\  399--414. Springer, 2020.

\bibitem[Shi et~al.(2024)Shi, Hao, and Yu]{shi2024spikingresformer}
Xinyu Shi, Zecheng Hao, and Zhaofei Yu.
\newblock Spikingresformer: Bridging resnet and vision transformer in spiking neural networks.
\newblock In \emph{Proceedings of the IEEE/CVF Conference on Computer Vision and Pattern Recognition}, pp.\  5610--5619, 2024.

\bibitem[Shrestha \& Orchard(2018)Shrestha and Orchard]{shrestha2018slayer}
Sumit~B Shrestha and Garrick Orchard.
\newblock Slayer: Spike layer error reassignment in time.
\newblock \emph{Advances in neural information processing systems}, 31, 2018.

\bibitem[Steele(2004)]{steele2004cauchy}
J~Michael Steele.
\newblock \emph{The Cauchy-Schwarz master class: an introduction to the art of mathematical inequalities}.
\newblock Cambridge University Press, 2004.

\bibitem[Tsipras et~al.(2018)Tsipras, Santurkar, Engstrom, Turner, and Madry]{tsipras2018robustness}
Dimitris Tsipras, Shibani Santurkar, Logan Engstrom, Alexander Turner, and Aleksander Madry.
\newblock Robustness may be at odds with accuracy.
\newblock \emph{arXiv preprint arXiv:1805.12152}, 2018.

\bibitem[Varghese et~al.(2016)Varghese, Wang, Barbhuiya, Kilpatrick, and Nikolopoulos]{varghese2016challenges}
Blesson Varghese, Nan Wang, Sakil Barbhuiya, Peter Kilpatrick, and Dimitrios~S Nikolopoulos.
\newblock Challenges and opportunities in edge computing.
\newblock In \emph{2016 IEEE international conference on smart cloud (SmartCloud)}, pp.\  20--26. IEEE, 2016.

\bibitem[Wang et~al.(2025{\natexlab{a}})Wang, Zhang, Belatreche, Xiao, Qing, Wei, Zhang, and Yang]{wang2025ternary}
Shuai Wang, Dehao Zhang, Ammar Belatreche, Yichen Xiao, Hongyu Qing, Wenjie Wei, Malu Zhang, and Yang Yang.
\newblock Ternary spike-based neuromorphic signal processing system.
\newblock \emph{Neural Networks}, 187:\penalty0 107333, 2025{\natexlab{a}}.

\bibitem[Wang et~al.(2025{\natexlab{b}})Wang, Zhang, Zhang, Belatreche, Xiao, Liang, Shan, Sun, Zhang, and Yang]{wang2025spiking}
Shuai Wang, Malu Zhang, Dehao Zhang, Ammar Belatreche, Yichen Xiao, Yu~Liang, Yimeng Shan, Qian Sun, Enqi Zhang, and Yang Yang.
\newblock Spiking vision transformer with saccadic attention.
\newblock \emph{arXiv preprint arXiv:2502.12677}, 2025{\natexlab{b}}.

\bibitem[Wang et~al.(2025{\natexlab{c}})Wang, Zheng, Chen, Belatreche, Wang, Jin, Wu, Zhang, Yang, and Li]{wang2025snn}
Shuai Wang, Haorui Zheng, Yukun Chen, Ammar Belatreche, Guoqing Wang, Yeying Jin, Jibin Wu, Malu Zhang, Yang Yang, and Haizhou Li.
\newblock Snn-ft: Temporal-coded spiking neural networks for fourier transform.
\newblock \emph{IEEE Transactions on Neural Networks and Learning Systems}, 2025{\natexlab{c}}.

\bibitem[Werbos(1990)]{werbos1990backpropagation}
Paul~J Werbos.
\newblock Backpropagation through time: what it does and how to do it.
\newblock \emph{Proceedings of the IEEE}, 78\penalty0 (10):\penalty0 1550--1560, 1990.

\bibitem[Wong et~al.(2020)Wong, Rice, and Kolter]{wong2020fast}
Eric Wong, Leslie Rice, and J~Zico Kolter.
\newblock Fast is better than free: Revisiting adversarial training.
\newblock \emph{arXiv preprint arXiv:2001.03994}, 2020.

\bibitem[Wu et~al.(2018)Wu, Deng, Li, Zhu, and Shi]{wu2018spatio}
Yujie Wu, Lei Deng, Guoqi Li, Jun Zhu, and Luping Shi.
\newblock Spatio-temporal backpropagation for training high-performance spiking neural networks.
\newblock \emph{Frontiers in neuroscience}, 12:\penalty0 331, 2018.

\bibitem[Wu et~al.(2019)Wu, Deng, Li, Zhu, Xie, and Shi]{wu2019direct}
Yujie Wu, Lei Deng, Guoqi Li, Jun Zhu, Yuan Xie, and Luping Shi.
\newblock Direct training for spiking neural networks: Faster, larger, better.
\newblock In \emph{Proceedings of the AAAI conference on artificial intelligence}, volume~33, pp.\  1311--1318, 2019.

\bibitem[Xiao et~al.(2025)Xiao, Wang, Zhang, Wei, Shan, Liu, Jiang, and Zhang]{xiao2025rethinking}
Yichen Xiao, Shuai Wang, Dehao Zhang, Wenjie Wei, Yimeng Shan, Xiaoli Liu, Yulin Jiang, and Malu Zhang.
\newblock Rethinking spiking self-attention mechanism: Implementing a-xnor similarity calculation in spiking transformers.
\newblock In \emph{Proceedings of the Computer Vision and Pattern Recognition Conference}, pp.\  5444--5454, 2025.

\bibitem[Xu et~al.(2020)Xu, Ma, Liu, Deb, Liu, Tang, and Jain]{xu2020adversarial}
Han Xu, Yao Ma, Hao-Chen Liu, Debayan Deb, Hui Liu, Ji-Liang Tang, and Anil~K Jain.
\newblock Adversarial attacks and defenses in images, graphs and text: A review.
\newblock \emph{International journal of automation and computing}, 17:\penalty0 151--178, 2020.

\bibitem[Xu et~al.(2024)Xu, Ma, Tang, Zheng, and Pan]{xufeel}
Mengting Xu, De~Ma, Huajin Tang, Qian Zheng, and Gang Pan.
\newblock Feel-snn: Robust spiking neural networks with frequency encoding and evolutionary leak factor.
\newblock In \emph{The Thirty-eighth Annual Conference on Neural Information Processing Systems}, 2024.

\bibitem[Yang \& Chen(2023)Yang and Chen]{yang2023effective}
Shuangming Yang and Badong Chen.
\newblock Effective surrogate gradient learning with high-order information bottleneck for spike-based machine intelligence.
\newblock \emph{IEEE transactions on neural networks and learning systems}, 2023.

\bibitem[Yao et~al.(2024{\natexlab{a}})Yao, Hu, Zhou, Yuan, Tian, Xu, and Li]{yao2024spike}
Man Yao, Jiakui Hu, Zhaokun Zhou, Li~Yuan, Yonghong Tian, Bo~Xu, and Guoqi Li.
\newblock Spike-driven transformer.
\newblock \emph{Advances in neural information processing systems}, 36, 2024{\natexlab{a}}.

\bibitem[Yao et~al.(2024{\natexlab{b}})Yao, Zhao, and Gu]{yao2024exploring}
Yanmeng Yao, Xiaohan Zhao, and Bin Gu.
\newblock Exploring vulnerabilities in spiking neural networks: Direct adversarial attacks on raw event data.
\newblock In \emph{European Conference on Computer Vision}, pp.\  412--428. Springer, 2024{\natexlab{b}}.

\bibitem[Yoo \& Jeong(2023)Yoo and Jeong]{yoo2023cbp}
Donghyung Yoo and Doo~Seok Jeong.
\newblock Cbp-qsnn: Spiking neural networks quantized using constrained backpropagation.
\newblock \emph{IEEE Journal on Emerging and Selected Topics in Circuits and Systems}, 2023.

\bibitem[Zhang et~al.(2025)Zhang, Zhang, Wang, Wang, Wei, Ma, Wang, Yang, and Li]{zhang2025dendritic}
Dehao Zhang, Malu Zhang, Shuai Wang, Jingya Wang, Wenjie Wei, Zeyu Ma, Guoqing Wang, Yang Yang, and Haizhou Li.
\newblock Dendritic resonate-and-fire neuron for effective and efficient long sequence modeling.
\newblock \emph{arXiv preprint arXiv:2509.17186}, 2025.

\bibitem[Zhang et~al.(2019)Zhang, Chen, Xiao, Gowal, Stanforth, Li, Boning, and Hsieh]{zhang2019towards}
Huan Zhang, Hongge Chen, Chaowei Xiao, Sven Gowal, Robert Stanforth, Bo~Li, Duane Boning, and Cho-Jui Hsieh.
\newblock Towards stable and efficient training of verifiably robust neural networks.
\newblock \emph{arXiv preprint arXiv:1906.06316}, 2019.

\bibitem[Zhang et~al.(2023)Zhang, Huo, Zhang, Qian, Liu, Pan, Wang, Qiao, Tang, and Chen]{zhang202322}
Jilin Zhang, Dexuan Huo, Jian Zhang, Chunqi Qian, Qi~Liu, Liyang Pan, Zhihua Wang, Ning Qiao, Kea-Tiong Tang, and Hong Chen.
\newblock 22.6 anp-i: A 28nm 1.5 pj/sop asynchronous spiking neural network processor enabling sub-o. 1 $\mu$j/sample on-chip learning for edge-ai applications.
\newblock In \emph{2023 IEEE International Solid-State Circuits Conference (ISSCC)}, pp.\  21--23. IEEE, 2023.

\bibitem[Zheng et~al.(2021)Zheng, Wu, Deng, Hu, and Li]{zheng2021going}
Hanle Zheng, Yujie Wu, Lei Deng, Yifan Hu, and Guoqi Li.
\newblock Going deeper with directly-trained larger spiking neural networks.
\newblock In \emph{Proceedings of the AAAI conference on artificial intelligence}, volume~35, pp.\  11062--11070, 2021.

\bibitem[Zhou et~al.(2023)Zhou, Yu, Zhou, Ma, Zhang, Zhou, and Tian]{zhou2023spikingformer}
Chenlin Zhou, Liutao Yu, Zhaokun Zhou, Zhengyu Ma, Han Zhang, Huihui Zhou, and Yonghong Tian.
\newblock Spikingformer: Spike-driven residual learning for transformer-based spiking neural network.
\newblock \emph{arXiv preprint arXiv:2304.11954}, 2023.

\bibitem[Zhou et~al.(2024)Zhou, Zhang, Zhou, Yu, Huang, Fan, Yuan, Ma, Zhou, and Tian]{zhou2024qkformer}
Chenlin Zhou, Han Zhang, Zhaokun Zhou, Liutao Yu, Liwei Huang, Xiaopeng Fan, Li~Yuan, Zhengyu Ma, Huihui Zhou, and Yonghong Tian.
\newblock Qkformer: Hierarchical spiking transformer using qk attention.
\newblock \emph{arXiv preprint arXiv:2403.16552}, 2024.

\end{thebibliography}
\bibliographystyle{iclr2026_conference}
\newpage

\appendix
\section{Robustness of Our TGO method for Neuromorphic Datasets}
\label{DVSapd}
Table \ref{DVS} illustrates the performance comparison between our proposed TGO+AT method and the standard Adversarial Training (AT) baseline with WideResNet-16 on the DVS-CRIAF10 neuromorphic dataset. In this study, we integrated TGO with conventional adversarial training, resulting in consistent improvements across all evaluation metrics. As evidenced in the table.\ref{DVS}, TGO+AT not only enhances clean sample accuracy by 1.9\% but also significantly improves robustness against various adversarial attacks, with particularly notable gains against stronger iterative attacks such as PGD7 (+4.1\%) and PGD30 (+3.6\%). 
\begin{table}[htbp]
\centering
\caption{Performance Comparison on DVS-CRIAF10 Dataset}
\label{tab:model_comparison}
\begin{tabular}{lccccccc}
\toprule
\textbf{Model} & \textbf{Clean} & \textbf{FGSM} & \textbf{RFGSM} & \textbf{PGD7} & \textbf{PGD10} & \textbf{PGD30} & \textbf{PGD50} \\
\midrule
AT & 72.2 & 57.4 & 65.2 & 46.9 & 45.6 & 45.2 & 44.2 \\
\textbf{TGO+AT} & \textbf{74.1} & \textbf{59.7} & \textbf{68.0} & \textbf{51.0} & \textbf{48.1} & \textbf{48.8} & \textbf{46.2} \\
\bottomrule
\end{tabular}
\label{DVS}
\end{table}
Furthermore, we evaluated the efficacy of our approach against event-based attacks~\citep{yao2024exploring} using WideResNet-16 on DVS-CRIAF10~\citep{li2017cifar10}. The Attack Success Rate (ASR) for standard AT was measured at 44.32\%, whereas AT+TGO achieved a substantially lower ASR of 31.0\%. This reduction in ASR indicates enhanced robustness, further demonstrating the effectiveness of the TGO methodology in mitigating event-based adversarial attacks.

\section{ Gradient-based Upper Bound of Adversarial Attack}
\label{A}

\textbf{Theorem}:  
Let \( f: \mathbb{R}^n \to \mathbb{R}^m \) be a continuously differentiable neural network function at point \( x \), and let \( \epsilon > 0 \) be sufficiently small. The adversarial perturbation measure \( \mathcal{R}_{\text{adv}}(f, x, \epsilon) \) satisfies:
\begin{equation}
\mathcal{R}_{\text{adv}}(f, x, \epsilon) \leq \epsilon^2 \|J_f(x)\|_2^2 + O(\epsilon^3), 
\end{equation}
where \( J_f(x) \) is the Jacobian matrix of function \(f(\cdot) \) at point \( x \).

\textbf{Proof}: We begin by defining our objective as the maximization of the squared difference between \( f(x + \epsilon \delta) \) and \( f(x) \), subject to the constraint \( \|\delta\|_p \leq 1 \):
\begin{equation}
\mathcal{R}_{\text{adv}}(f, x, \epsilon) = \underset{\delta}{\text{argmax}} \left\{ \|f(x + \epsilon \delta) - f(x)\|_2^2 : |\delta|_p \leq 1 \right\}.\label{19}
\end{equation}
Next, we apply Taylor’s theorem with the Lagrange remainder for the vector-valued function \( f(x + \epsilon \delta) \) around point \( x \). Specifically, for \( \delta \) with \( \|\delta\|_p \leq 1 \), there exists \( \xi \in (0, 1) \) such that:
\begin{equation}
f(x + \epsilon \delta) = f(x) +  J_f(x) (\epsilon \delta)  + \frac{\left( \epsilon \delta \right)^2}{2} H_f(x + \xi \epsilon \delta),
\end{equation}
where \( J_f(x) \) is the Jacobian matrix of the function \( f\left(\cdot\right) \) at the point \( x \), and \( H_f \) is the Hessian tensor of second derivatives. Substituting this expansion into the squared difference, we obtain:
\begin{align}
\|f(x + \epsilon \delta) - f(x)\|_2^2 
&= \left\| J_f(x) (\epsilon \delta) + \frac{(\epsilon \delta)^2}{2} H_f(x + \xi \epsilon \delta) \right\|_2^2 \\
&\le \left( \|J_f(x) (\epsilon \delta)\|_2
       + \frac{1}{2} \|(\epsilon \delta)^2 H_f(x + \xi \epsilon \delta)\|_2 \right)^2.
\label{21}
\end{align}
Combining Eq.~\ref{19} and the Cauchy-Schwarz inequality, we can systematically expand each component of the Eq.~\ref{21}. For the first term, it can be expand as follows:
\begin{equation}
\|J_f(x) (\epsilon \delta)\|_2 \leq \|J_f(x)\|_2 \|\epsilon \delta\|_2 \leq \epsilon
 \|J_f(x)\|_2,
\end{equation}
For the second term of Eq.~\ref{21}, we can expand it by utilizing the \(\ell_2\) norm characteristic of the Hessian matrix. It can be defined as follows:
\begin{equation}
\|H_f(x + \xi \epsilon \delta)(\epsilon \delta)^2\|_2 \leq  \| \lambda_{Hmax} (\epsilon \delta)^2\|_2 \leq \lambda_{Hmax} \epsilon^2,
\end{equation}
where \(\lambda_{Hmax}\) is the maximum eigenvalue of the Hessian matrix. Substituting these bounds into the expression for the squared difference, we obtain:
\begin{equation}
\|f(x + \epsilon \delta) - f(x)\|_2^2 \leq \epsilon^2 \|J_f(x)\|_2^2 +  \epsilon^3 \lambda_{Hmax}  \|J_f(x)\|_2 + \frac{\epsilon^4 \lambda_{Hmax}^2}{4},
\end{equation}
Thus, the adversarial perturbation measure satisfies the following upper bound:
\begin{equation}
\mathcal{R}_{\text{adv}}(f, x, \epsilon) \leq \epsilon^2 \|J_f(x)\|_2^2 + O(\epsilon^2),
\end{equation}
where \(O(\epsilon^2)\) represents a higher-order infinitesimal of \(\epsilon^2\). Since \(\epsilon\) is a very small quantity, the term \(O(\epsilon^2)\) in the formula can be neglected. Consequently, the theoretical upper bound of the adversarial perturbation is primarily dependent on the \(\ell_2\) norm of \(J_f(x)\). Specially, \(J_f(x)\) can be expressed as the collection of gradients of each component of the function:
\begin{equation}
J_f(x) = 
\begin{bmatrix}
\nabla f_1(x)^T \\
\nabla f_2(x)^T \\
\vdots \\
\nabla f_m(x)^T
\end{bmatrix}.
\end{equation}
The \(\ell_2\) norm of the Jacobian matrix is related to the gradients through the following expression:
\begin{equation}
\|J_f(x)\|_2^2 = \lambda_{Jmax}(J_f(x)^T J_f(x)) = \lambda_{Jmax}\left( \sum_{i=1}^{m} \nabla f_i(x) \nabla f_i(x)^T \right).
\end{equation}
Where \( \lambda_{Jmax} \) denotes the maximum eigenvalue of the matrix, which reflects the largest possible stretching effect of the Jacobian matrix, directly linking the gradient magnitudes to the overall sensitivity of the network's output.

\section{Activation-based Upper Bound of Adversarial Attack}
\label{aaaaaaaaaaaa}
\textbf{Theorem}:  For a discrete spike pattern mapping $f: \mathbb{R}^n \rightarrow \mathbb{R}^m$, small perturbations $\varepsilon\delta$ around input $x$ induce a finite set of activation pattern transitions. The adversarial robustness upper bound can be approximated as:
$$R_{\text{adv}}(f, x, \varepsilon) \leq \varepsilon^2 \max_{1 \leq k \leq K} \|A_{\mathcal{A}_k}\|_{p \to 2}^2,$$
where $K$ denotes the number of activation regions intersecting the perturbation ball $B_\varepsilon(x)$, and $A_{\mathcal{A}_k} \in \mathbb{R}^{m \times n}$ is the affine transformation matrix for activation pattern $\mathcal{A}_k = \{(l, i) : u_i(x) \geq \theta_i\}$.

\textbf{Proof:}
Consider an SNN represented by the function $f : \mathbb{R}^n \to \mathbb{R}^m$, where $\mathbb{R}^n$ denotes the input space and $\mathbb{R}^m$ denotes the output space. The adversarial perturbation measure at an input point $x \in \mathbb{R}^n$ with a perturbation radius $\varepsilon > 0$ is defined as:
\begin{equation}
R_{\text{adv}}(f, x, \varepsilon) = \max_{\|\delta\|_p \leq 1} \|f(x + \varepsilon\delta) - f(x)\|_2^2,
\end{equation}
where $\delta$ represents the perturbation direction vector constrained by the $p$-norm unit ball. Given the piecewise-linear nature of the SNN, small perturbations $\varepsilon\delta$ around an input $x$ lead to a finite set of possible activation pattern changes. Specifically, the activation pattern at any input $x$ can be defined as:
\begin{equation}
A(x) = \{(l, i) : u_i^{(l)}(x) \geq \theta_i^{(l)}\},
\end{equation}
where $u_i^{(l)}(x)$ denotes the membrane potential of neuron $i$ at layer $l$, and $\theta_i^{(l)}$ denotes the corresponding firing threshold. Consequently, each distinct activation pattern $\mathcal{A}$ corresponds uniquely to a convex polyhedral region in the input space defined as:
\begin{equation}
R_\mathcal{A} = \{x \in \mathbb{R}^n : A(x) = \mathcal{A}\}.
\end{equation}
Within any such region $R_\mathcal{A}$, the network behaves as an affine transformation described by:
\begin{equation}
f(x) = A_\mathcal{A} x + b_\mathcal{A},
\end{equation}
where $A_\mathcal{A} \in \mathbb{R}^{m \times n}$ and $b_\mathcal{A} \in \mathbb{R}^m$ are determined entirely by the network's weights and biases under the specific activation pattern $\mathcal{A}$. For sufficiently small perturbations $\delta$ ensuring $x + \varepsilon\delta \in R_\mathcal{A}$, the network's output change can explicitly be expressed as:
\begin{equation}
f(x + \varepsilon\delta) - f(x) = \varepsilon A_\mathcal{A} \delta.
\end{equation}
Utilizing the properties of operator norms, we bound the change in network output within the given region by:
\begin{align}
G(x) = \|f(x + \varepsilon\delta) - f(x)\|_2^2, \quad
G(x) \leq \varepsilon^2 \|A_\mathcal{A}\|_{p \to 2}^2,
\end{align}
where $\|A_\mathcal{A}\|_{p \to 2}$ denotes the operator norm of the matrix $A_\mathcal{A}$ defined with respect to the input $p$-norm and output 2-norm. Considering all possible regions intersecting the perturbation ball $B_\varepsilon(x) = \{x + \varepsilon\delta : \|\delta\|_p \leq 1\}$, we derive the global bound for the adversarial perturbation measure as:
\begin{equation}
R_{\text{adv}}(f, x, \varepsilon) \leq \varepsilon^2 \max_{1 \leq k \leq K} \|A_{\mathcal{A}_k}\|_{p \to 2}^2,
\end{equation}
where $K$ represents the finite number of distinct activation regions intersecting $B_\varepsilon(x)$. Further analyzing the structure of the matrices $A_{\mathcal{A}_k}$, one observes that the sensitivity of these matrices primarily depends on neurons whose membrane potentials $u_i^{(l)}(x)$ are near their firing thresholds $\theta_i^{(l)}$. Consequently, larger distances between membrane potentials and thresholds indicate greater activation stability, leading to smaller variations in the affine transformation $A_{\mathcal{A}_k}$ and ultimately reducing the operator norm $\|A_{\mathcal{A}_k}\|_{p \to 2}$. Formally stated, as the absolute difference between the membrane potential $u_i^{(l)}(x)$ and the threshold $\theta_i^{(l)}$ increases, the adversarial perturbation measure strictly decreases:
\begin{equation}
|u_i^{(l)}(x) - \theta_i^{(l)}| \uparrow \quad \Longrightarrow \quad R_{\text{adv}}(f, x, \varepsilon) \downarrow.
\end{equation}

\section{Proof of the probability for spiking neurons' state flipping}

\textbf{Theorem}:  Consider a spiking neuron with membrane potential \( V[t] \) at time \(t\), firing threshold \( V_{\mathrm{th}} \), and subject to Gaussian white noise perturbation \( \eta[t] \sim \mathcal{N}(0, \sigma^2) \). The probability \( P_{\mathrm{flip}} \) of the neuron's state transition (flipping) is given by:
\[
P_{\mathrm{flip}} = 
\begin{cases} 
\Phi\left( \frac{V_{\mathrm{th}} - V[t]}{\sigma} \right), & \text{if } V[t] \ge V_{\mathrm{th}}, \\
1 - \Phi\left( \frac{V_{\mathrm{th}} - V[t]}{\sigma} \right), & \text{if } V[t] < V_{\mathrm{th}},
\end{cases}
\]
where \( \Phi(\cdot) \) denotes the cumulative distribution function (CDF) of the standard normal distribution.

\textbf{Proof}: Let us consider the stochastic dynamics of the neuron's membrane potential under noise perturbation. Define the noise-perturbed membrane potential as \( H' = V[t] + \eta[t] \), where \( \eta[t] \sim \mathcal{N}(0, \sigma^2) \) represents additive Gaussian white noise. The neuron's state transition probability depends on whether this perturbed potential crosses the threshold \( V_{\mathrm{th}} \). We analyze this probability by considering two distinct cases. 

For Case 1, when \( V[t] \geq V_{\mathrm{th}} \), the deterministic dynamics would result in the neuron firing (output state = 1). However, the presence of noise can induce a transition to the non-firing state (0). This flip occurs if and only if the perturbed potential falls below threshold: 
\begin{equation}
    H' < V_{\mathrm{th}} 
\quad \Longleftrightarrow \quad
V[t] + \eta[t] < V_{\mathrm{th}}
\quad \Longleftrightarrow \quad
\eta[t] < V_{\mathrm{th}} - V[t].
\end{equation}

Since \( \eta[t] \) follows a normal distribution with mean 0 and variance \( \sigma^2 \), we can standardize this inequality. The probability of transition from state 1 to 0 is: 
\begin{equation}
    P_{1 \rightarrow 0} 
\, = \, P(\eta[t] < V_{\mathrm{th}} - V[t])
\, = \, \Phi\left( \frac{V_{\mathrm{th}} - V[t]}{\sigma} \right),
\end{equation}
where the last equality follows from the definition of the standard normal CDF. 

For Case 2, when \( V[t] < V_{\mathrm{th}} \), the deterministic dynamics would result in no firing (output state = 0). A transition to the firing state (1) occurs when noise pushes the membrane potential above threshold: 
\begin{equation}
    H' \geq V_{\mathrm{th}} 
\quad \Longleftrightarrow \quad
V[t] + \eta[t] \geq V_{\mathrm{th}}
\quad \Longleftrightarrow \quad
\eta[t] \geq V_{\mathrm{th}} - V[t].
\end{equation}

Following the same probabilistic reasoning, and noting that \( P(\eta[t] \geq x) = 1 - P(\eta[t] < x) \) for any \(x\), the probability of transition from state 0 to 1 is: 
\begin{equation}
    P_{0 \rightarrow 1}
\, = \, P(\eta[t] \geq V_{\mathrm{th}} - V[t])
\, = \, 1 - \Phi\left( \frac{V_{\mathrm{th}} - V[t]}{\sigma} \right).
\end{equation}

Combining these cases yields the desired expression for \( P_{\mathrm{flip}} \). Note that this result naturally captures the intuition that the probability of state transition decreases as the membrane potential moves further from the threshold in either direction, due to the monotonicity properties of \( \Phi \).

\section{Proof of the probability for Noisy-LIF's state flipping}
\textbf{Theorm:}
\label{lemma: entropy_binomial}
Consider a Noisy-LIF neuron with membrane potential \(V[t]\), Gaussian noise \(\xi[t] \sim \mathcal{N}(0, \sigma^2)\), and threshold \(V_{\text{th}}\). The probability of firing can be expressed as:
\[
    P(S_l = 1 \mid V[t], \xi[t]) = 1 - \Phi\left( \frac{V_{\text{th}} - (V[t] + \xi[t])}{\sigma} \right),
\]
where \(\Phi(\cdot)\) is the CDF of the standard normal distribution. As the \(\sigma\) increases, the probability density function \(\phi\) becomes broader, reducing the sensitivity to small variations in \(V[t]\).

\textbf{Proof:}
The firing condition in the Noisy-LIF neuron model is given by the inequality \(V[t] + \xi[t] \geq V_{\text{th}}\). To analyze the firing probability conditioned on the membrane potential \(V[t]\) and the noise \(\xi[t]\), we start by rewriting the firing condition in terms of the standard normal distribution:
\begin{equation}
    P(S_l = 1 \mid V[t], \xi[t]) = P(\xi[t] \geq V_{\text{th}} - V[t]) = 1 - \Phi\left(\frac{V_{\text{th}} - V[t]}{\sigma}\right).
\end{equation}
Differentiating this probability with respect to \(V[t]\) gives us the sensitivity of the firing probability to changes in the membrane potential:
\begin{equation}
    \frac{\partial P(S_l = 1 \mid V[t], \xi[t])}{\partial V[t]} = \frac{1}{\sigma} \phi\left(\frac{V_{\text{th}} - V[t]}{\sigma}\right),
\end{equation}
where \(\phi\) is the probability density function of the standard normal distribution. This derivative indicates how a small change in \(V[t]\) affects the probability of firing. To understand the impact of \(\sigma\) on the sensitivity, consider that as \(\sigma\) increases, the width of \(\phi\) also increases, making it flatter. This change results in a decrease in the magnitude of the derivative, indicating reduced sensitivity to small changes in \(V[t]\).

In this research, the training process for all experiments extends over a duration of 300 epochs. To address potential vanishing or exploding gradients, batch normalization techniques are integrated throughout the network architecture. The LIF neurons in the experiments are configured with a decay factor of 0.5 and a threshold of 1. The proposed noisy LIF neurons incorporate Gaussian noise with a mean of 0 and a variance of 0.4. All experiments were conducted with 300 training iterations, repeated thrice, and the reported results are the averages of these three runs.
For optimization, we implement the stochastic gradient descent (SGD) algorithm, starting with a learning rate set at 0.1. The adjustment of the learning rate follows a cosine annealing strategy. All experiments are performed on a PyTorch framework, facilitated by the computational power of an NVIDIA RTX 4090 GPU.

\section{Visualization of Gradient Sparsity and Loss Landscape}
\label{E}
To assess the effectiveness of TGO in adversarial environments, we visualized the Gradient Sparsity and loss landscape of the BPTT-based WR16 model under RFSGM attack, comparing TGO optimization with the vanilla SNN. The Gradient Sparsity quantifies the gradient \(\nabla_x f_y\) between the label and the input image.

\textbf{Gradient Sparsity}: The gradient visualization begins with an input image \(\mathbf{I}_i \in \mathbb{R}^{C \times H \times W}\), where \(C\) denotes the number of channels (e.g., RGB), and \(H \times W\) represents the image's spatial resolution. A pre-trained model \(f(\mathbf{I}; \theta)\), parameterized by \(\theta\), maps the input image to a probability distribution over \(K\) classes, denoted as \(\mathbf{p} \in \mathbb{R}^K\). The prediction process is given by:
\begin{equation}
    \mathbf{p} = f(\mathbf{I}_i; \theta), \quad \mathbf{p}[y_i] = \frac{\exp(z_{y_i})}{\sum_{k=1}^K \exp(z_k)},
\end{equation}
where \(z_k\) is the pre-activation value for class \(k\) prior to the application of the softmax function. The model is optimized using the cross-entropy loss between the predicted probability distribution and the true label \(y_i\), formulated as:
\begin{equation}
    \mathcal{L}(\mathbf{I}_i, y_i) = -\log \mathbf{p}[y_i] = -\log\left(\frac{\exp(z_{y_i})}{\sum_{k=1}^K \exp(z_k)}\right).
\end{equation}
To assess the sensitivity of the model's predictions to the input, the gradient of the loss with respect to the input image is computed, resulting in a gradient tensor \(\mathbf{g}_i \in \mathbb{R}^{C \times H \times W}\), expressed as:
\begin{equation}
    \mathbf{g}_i = \frac{\partial \mathcal{L}(\mathbf{I}_i, y_i)}{\partial \mathbf{I}_i}, \quad \mathbf{g}_i[c, x, y] = \frac{\partial}{\partial \mathbf{I}_i[c, x, y]} \left(-\log \mathbf{p}[y_i]\right),
\end{equation}
where \(c\), \(x\), and \(y\) correspond to the channel and spatial coordinates of the image. We compute the gradient \(\nabla_x f_y(x)\) under two scenarios: (1) a vanilla SNN, and (2) a TGO-optimized SNN, both trained using BPTT. As shown in Fig.\ref{xxxxxxxxx}, we visualize \(\nabla_x f_y(x)\) for both models on CIRAF00. The experimental results clearly indicate that, compared to the vanilla SNN, the TGO-optimized SNN exhibits sparser gradients with respect to the input image, demonstrating the effectiveness of membrane potential constraints in increasing gradient sparsity in SNNs. These findings also validate that sparse gradients contribute to enhancing the robustness of SNNs.

\textbf{Loss Landscape}: The process of visualizing the loss landscape starts with a pre-trained model parameterized by \(\theta \in \mathbb{R}^M\), where \(M\) represents the total number of parameters in the model. The performance of the model is quantified using a loss function \(\mathcal{L}(\theta)\), defined over a dataset of \(N\) samples as: 
\begin{equation}
    \mathcal{L}(\theta) = \frac{1}{N} \sum_{i=1}^N \ell(f(\mathbf{I}_i; \theta), y_i), \quad \ell(f(\mathbf{I}_i; \theta), y_i) = -\log \mathbf{p}[y_i],
\end{equation}
where \(f(\mathbf{I}_i; \theta)\) denotes the output of the model for input \(\mathbf{I}_i\), \(\ell\) is the sample-wise loss (e.g., cross-entropy), and \((\mathbf{I}_i, y_i)\) are the input and label for the \(i\)-th sample. The predicted probability distribution \(\mathbf{p} \in \mathbb{R}^K\) is computed via softmax as:
\begin{equation}
    \mathbf{p}[y_i] = \frac{\exp(z_{y_i})}{\sum_{k=1}^K \exp(z_k)}, \quad z_k = f_k(\mathbf{I}_i; \theta),
\end{equation}
where \(z_k\) is the pre-activation value for class \(k\), and \(K\) is the total number of classes. To visualize the local geometry of \(\mathcal{L}(\theta)\) around a specific parameter set \(\theta\), two directions \(\mathbf{d}_1, \mathbf{d}_2 \in \mathbb{R}^M\) are defined, satisfying:
\begin{equation}
    \|\mathbf{d}_1\| = \|\mathbf{d}_2\| = 1, \quad \mathbf{d}_1^\top \mathbf{d}_2 = 0.
\end{equation}

Typically, \(\mathbf{d}_1\) is chosen as the gradient direction:
\begin{equation}
    \mathbf{d}_1 = \frac{\nabla_\theta \mathcal{L}(\theta)}{\|\nabla_\theta \mathcal{L}(\theta)\|}, \quad \nabla_\theta \mathcal{L}(\theta) = \frac{\partial \mathcal{L}(\theta)}{\partial \theta},
\end{equation}
while \(\mathbf{d}_2\) is sampled randomly and orthogonalized to \(\mathbf{d}_1\). The perturbed parameter vector is then expressed as:
\begin{equation}
    \theta' = \theta + \alpha \mathbf{d}_1 + \beta \mathbf{d}_2,
\end{equation}
where \(\alpha, \beta \in \mathbb{R}\) control the magnitudes of the perturbations along \(\mathbf{d}_1\) and \(\mathbf{d}_2\). At each perturbation point \((\alpha, \beta)\), the loss is computed as:
\begin{equation}
    \mathcal{L}(\theta') = \frac{1}{N} \sum_{i=1}^N \ell(f(\mathbf{I}_i; \theta + \alpha \mathbf{d}_1 + \beta \mathbf{d}_2), y_i).
\end{equation}
The parameter space is discretized into a grid of perturbation values \(\alpha \in [-\alpha_{\text{max}}, \alpha_{\text{max}}]\) and \(\beta \in [-\beta_{\text{max}}, \beta_{\text{max}}]\), such that the loss values form a matrix:
\begin{equation}
    \mathbf{L} = \left[\mathcal{L}(\theta + \alpha \mathbf{d}_1 + \beta \mathbf{d}_2)\right] \in \mathbb{R}^{n_\alpha \times n_\beta},
\end{equation}
where \(n_\alpha\) and \(n_\beta\) are the number of grid points along the \(\alpha\) and \(\beta\) directions, respectively. Finally, \(\mathbf{L}\) is visualized as either a 3D surface or a 2D contour plot:  
\begin{equation}
    \mathcal{L}(\alpha, \beta) = \mathcal{L}(\theta + \alpha \mathbf{d}_1 + \beta \mathbf{d}_2) \in \mathbb{R}^3,
\end{equation}
where the gradient of the loss surface can reveal the flatness, sharpness, or other geometric properties of \(\mathcal{L}(\theta)\) in the vicinity of the parameter set.
As shown in the left part of Fig. \ref{fig5}, the 3D loss landscape reveals that the TGO-optimized model maintains stable loss convergence, demonstrating robust performance. In contrast, the vanilla SNN exhibits localized reverse peaks in the loss landscape under attack. This instability is due to the FGSM gradient-based attack, which perturbs the input along the loss gradient, causing significant fluctuations in the loss. Specifically, FGSM computes the gradient of the loss with respect to the input image and perturbs the input to maximize the loss, pushing the vanilla SNN into regions of the loss landscape that are highly sensitive to small perturbations. In contrast, the TGO-optimized model exhibits smoother transitions in the loss landscape, indicating that its optimization enhances stability against adversarial attacks.

\section{Limitations}
\label{lim}
The limitations of this study include the performance evaluation of Our TGO method on larger model architectures, primarily because existing research predominantly utilizes these specific network structures and their corresponding datasets. For comparative consistency, we maintained these established architectures. Additionally, we have not addressed deployment challenges related to hardware transitions, nor conducted robustness testing in authentic edge-computing adversarial environments. These limitations will be addressed in future research. The experimental results presented in this paper are reproducible, with detailed explanations of model training and configuration provided in the main text and supplemented in the appendix. Our code and models will be made publicly available on GitHub upon acceptance of this paper.

\section{Use of Large Language Model}
In preparing this manuscript, we utilize a large language model (LLM) solely to aid and polish the writing. The LLM is used for grammar checking, language refinement, and improving clarity of expression. It does not contribute to the formulation of research ideas, methodology, experiments, data analysis, or conclusions. All presented in this paper is entirely the work of the authors.

\end{document}